%% file: neurips_2026.tex
\documentclass{article}

 \usepackage[preprint]{neurips_2026}

\usepackage[utf8]{inputenc} 
\usepackage[T1]{fontenc}    
\usepackage{hyperref}       
\usepackage{url}            
\usepackage{booktabs}       
\usepackage{amsfonts}       
\usepackage{nicefrac}       
\usepackage{microtype}      
\usepackage{xcolor}   
\usepackage{hyperref}
\usepackage{url}
\usepackage{array}
\usepackage{graphicx}
\usepackage{subcaption}
\usepackage[table]{xcolor}
\usepackage{arydshln}
\usepackage{multirow}
\usepackage{amsmath}
\usepackage{amsthm}
\usepackage{wrapfig}
\usepackage{tikz}
\usepackage{comment}
\usepackage{caption}
\usepackage{algorithm}
\usepackage{algorithmic}
\definecolor{peach}{RGB}{255,218,185}
\definecolor{lightblue}{RGB}{167, 199, 231}
\newtheorem{theorem}{Theorem}
\newtheorem{corollary}{Corollary}

\title{From Model to Data (M2D): Shifting Complexity from GNNs to Graphs for Transparent Graph Learning}

\author{%
  Debolina Halder Lina
\\
Department of Computer Science, Rice University, Houston, TX 77005, USA
\AND
 Arlei Silva\\
Department of Computer Science \& Ken Kennedy Institute, Rice University, Houston, TX 77005, USA\\
}

\begin{document}

\maketitle

\begin{abstract}
  Graph Neural Networks (GNNs) achieve high performance but can be opaque to humans, making it difficult to understand and compare the many proposed architectures. While existing explainability methods attribute individual predictions to nodes, edges, or features, they do not provide architectural transparency or explain the fundamental performance gap between simple and more complex models. To address this limitation, we introduce Model-to-Data (M2D) distillation, a new framework that increases transparency by transferring model complexity into the data space. M2D distills the teacher model into an augmented graph with enriched features and structure, enabling a simple student to match the teacher's performance. By materializing model behavior in the data, our approach allows humans to inspect architectural advantages directly. We show that M2D reveals underlying mechanisms such as fairness objectives and attention-based aggregation in an interpretable way, enhancing GNN transparency while preserving performance.
\end{abstract}

\input{files/introduction}
\input{files/method}

\input{files/result}

\input{files/analysis}

\input{files/conclusion}

\bibliographystyle{plain}
\bibliography{neurips}
\clearpage

\input{files/appendix}



\end{document}

%% file: files/introduction.tex
\section{Introduction}


Graph Neural Networks (GNNs) have achieved state-of-the-art performance on graph tasks such as node classification, link prediction, and graph classification \citep{gilmer2017neural,kipf2016semi,velivckovic2018graph,hamilton2017inductive}. However, as GNNs become more complex to address limitations such as biased predictions, lack of robustness to data perturbations, over-smoothing, and limited expressiveness, they also become increasingly opaque to humans.    
This \textit{epistemic opacity} has practical implications ranging from limiting the adoption of GNNs in high-stakes settings to hindering our ability to understand why certain architectures outperform others on specific benchmarks. Our work aims to make GNNs more transparent by enabling humans to (1) understand their logic and (2) audit them for correctness and bias \cite{lipton2018mythos}.

While traditional explainers identify which input features influenced a prediction, they fail to explain the architectural suitability of one model over another for a specific dataset. For instance, it is unclear how the self-attention mechanism translates to the performance gains of a Graph Attention Network (GAT) over a Graph Convolutional Network (GCN) on the Cora dataset \cite{velivckovic2018graph}. If we can materialize this architectural advantage into static data augmentations, we enable humans to audit the model's logic directly within the data space, revealing why a complex architecture outperforms a simpler one.

To this end, we introduce Model-to-Data (M2D) distillation, which transfers complexity from a teacher model $\mathcal{M}$ into an augmented graph $\mathbb{D}$ characterized by enriched node features and modified adjacency weights. This allows a lightweight student $m$ to achieve teacher-level performance while maintaining a human-auditable data representation. 
We view M2D in a unified space of knowledge and data distillation, illustrated in Fig. \ref{fig:modeldataspace}. 
While Knowledge (K) distillation \cite{gou2021knowledge} reduces model complexity (vertical shift) and Data (D) distillation reduces data complexity \cite{lei2023comprehensive} (horizontal shift), M2D distillation trades model complexity for data complexity so that humans can compare models $\mathcal{M}$ and $m$ based on the differences between the corresponding datasets $D$ and $\mathbb{D}$. 


\begin{figure}
    \centering\includegraphics[width=0.8\textwidth]{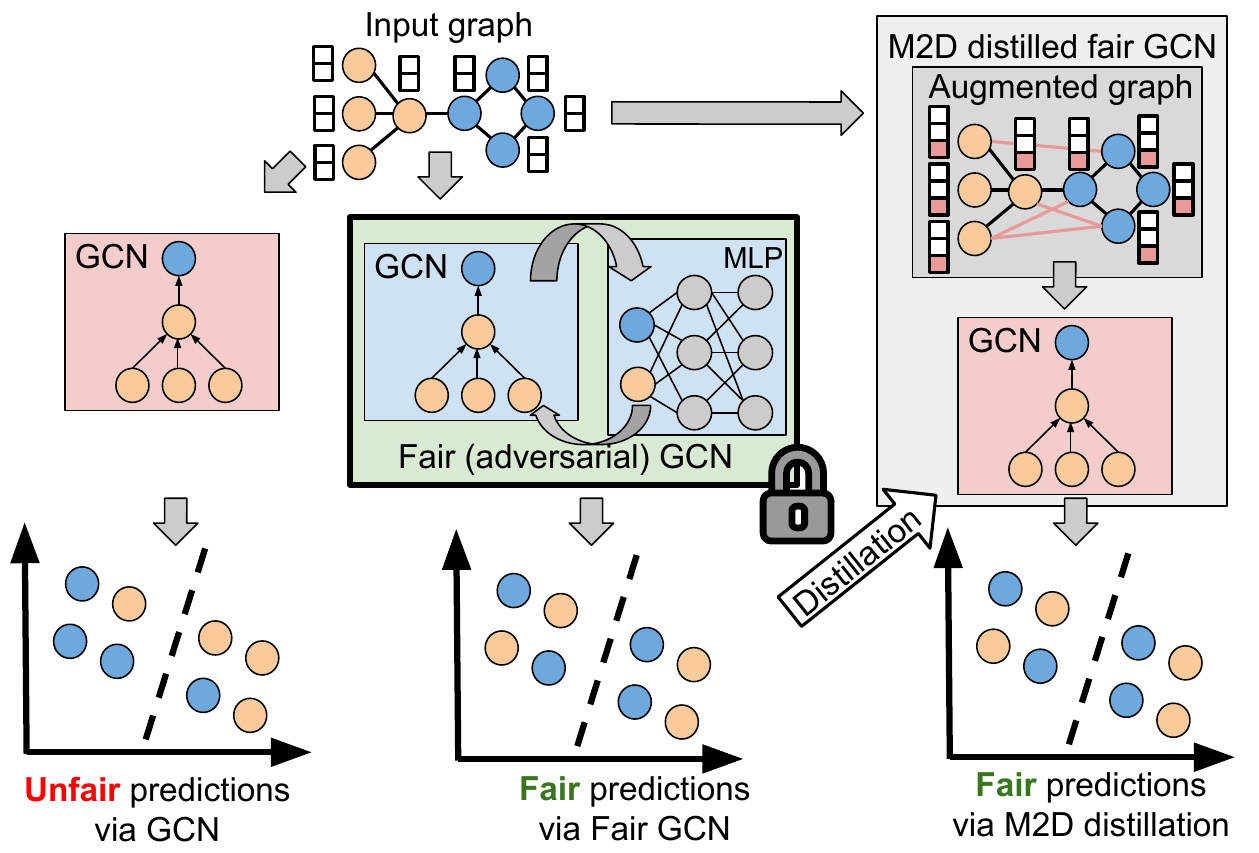}
    \caption{Increasing transparency of a fair Graph Convolutional Network (GCN) via M2D using a standard GCN as a comparison. The mechanism that enables the Fair GCN to generate fair predictions is not transparent to the user (middle). M2D makes the Fair GCN more transparent by distilling it into an augmented graph so that the GCN predictions are fair (right). Intuitively, the graph augmentation should capture how the Fair GCN addresses the bias in the features and topology of the input graph.}
    \label{fig:m2dConcept}
\end{figure}
Figure~\ref{fig:m2dConcept} illustrates our approach in the context of fair machine learning, where the goal is to better understand how a fair Graph Convolutional Network (GCN) mitigates biases in node classification compared to a simple GCN. More specifically, although a fair GCN produces bias-mitigated predictions, the internal mechanisms that enforce fairness are often opaque (middle). 
\begin{wrapfigure}{r}{0.35\textwidth}
    \centering
    \vspace{-10pt}
    \includegraphics[width=\linewidth]{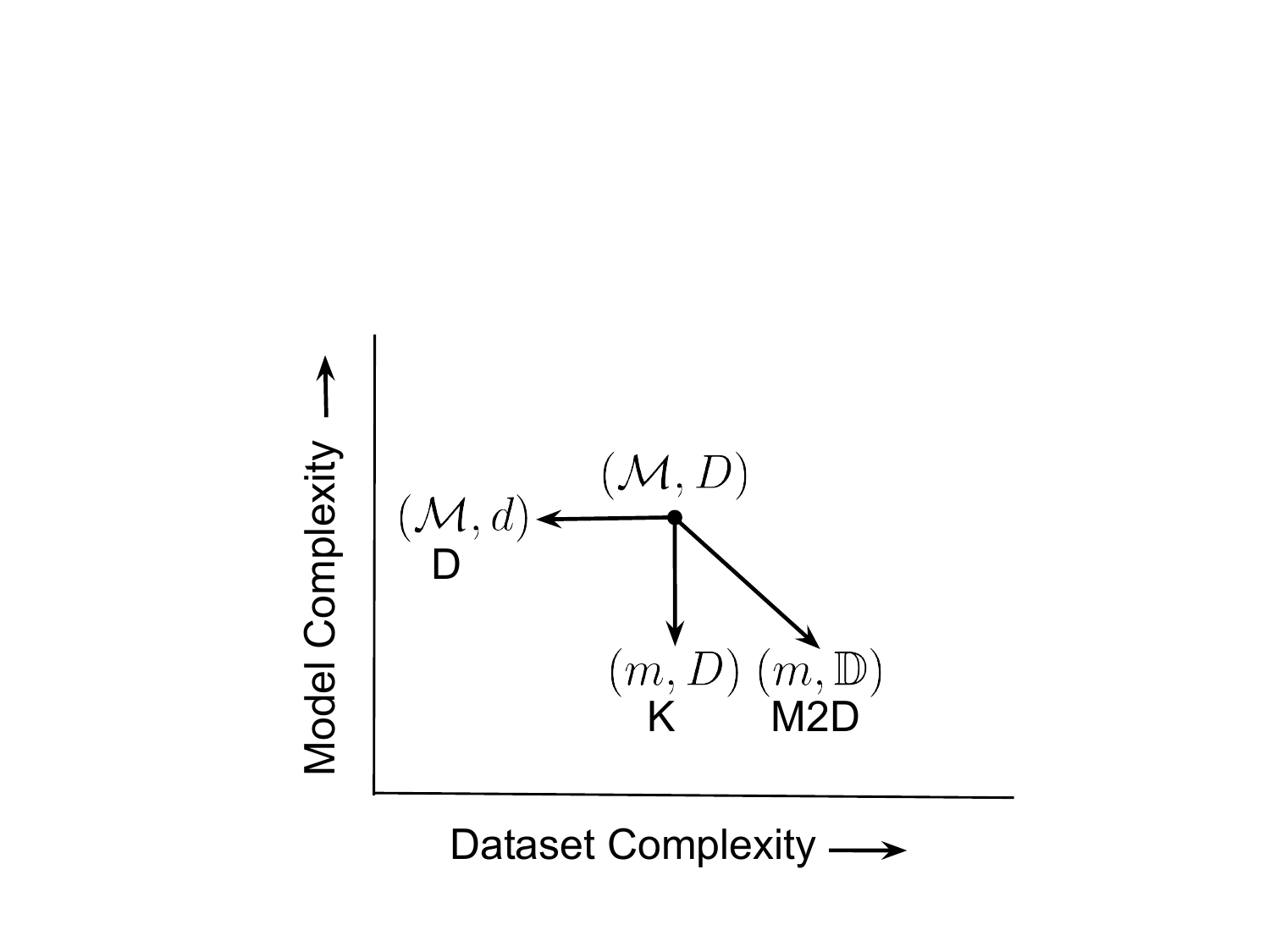}
    \vspace{-10pt}
    \caption{Distillation design space with data (x-axis) and model (y-axis) complexity. Knowledge (K) distillation reduces model complexity (vertical), and data (D) distillation reduces data complexity (horizontal). Model-to-Data (M2D) distillation transfers capacity from the model to the data (diagonal).}
    \vspace{-10pt}
    \label{fig:modeldataspace}
\end{wrapfigure}
\vspace{-10pt}

By shifting complexity to the data (right), fairness is no longer hidden within opaque adversarial training or complex loss functions but is instead materialized as modifications to the graph. This is particularly critical for high-stakes auditing: an auditor can inspect the specific edges added or features modified to mitigate bias, providing a more intuitive explanation than the internal gradients of a Fair GCN. Furthermore, we show that M2D can effectively `capture' the global context of Graph Transformers, distilling their long-range dependencies into local feature augmentations that a simple GCN can process.


M2D has other applications beyond transparency. The graph augmentations can be shared as benchmarks for simpler models that can be trained under resource constraints (e.g., on edge devices). Moreover, M2D can be applied to reverse-engineer a black-box model using only access to its logits.

We summarize the contributions of our paper as follows: (1) we introduce model-to-data (M2D) distillation as a general framework to exchange model and data complexity in GNNs; (2) we instantiate M2D through an iterative framework using feature and structure learning and multi-objective optimization; and (3) we empirically show how M2D increases the transparency of fair GNNs, Graph Attention Networks, and Graph Transformers.

\subsection{Related Work}
\paragraph{Graph explainers:} GNN explainers focus on instance-level explanations, aiming to interpret individual predictions. Gradient-based methods (e.g., SA, Guided BP, CAM, Grad-CAM) attribute predictions using gradients or activations; perturbation-based methods (e.g., GNNExplainer, PGExplainer, GraphMask, DnX, SubgraphX) learn masks over nodes, edges, or features to preserve predictions; surrogate methods (e.g., GraphLime, RelEx, PGM-Explainer) approximate local model behavior with interpretable models; and decomposition methods (e.g., LRP, Excitation BP, GNN-LRP) distribute prediction scores back to the input space \cite{baldassarre2019explainability,pope2019explainability,ying2019gnnexplainer,luo2020parameterized,schlichtkrull2020interpreting,pereira2023distill,yuan2021explainability,huang2022graphlime,zhang2021relex,vu2020pgm}.
Model-level explainers (e.g., XGNN, PAGE, GNNInterpreter, GLGExplainer, GCFExplainer, MOSE) seek to provide global explanations by identifying patterns, subgraphs, or concepts that consistently influence predictions across a target class  \cite{shin2024page,vasilcoiu2024re,xuanyuan2023global,kang2022ganexplainer,liu2025learning}. While these methods explain what drives predictions, they do not explain why a particular architecture outperforms another on a dataset. In contrast, M2D targets architectural transparency by translating model behavior into explicit modifications of the data, enabling direct comparison of models through their induced data transformations.
\paragraph{Knowledge distillation on graphs:} Knowledge distillation on graphs varies along two axes: the type of knowledge transferred and the distillation framework. Existing methods distill either logits by aligning output distributions (e.g., TinyGNN, GFKD, DFAD-GNN, KDGA, CPF, GLNN) \cite{yan2020tinygnn,deng2021graph,zhuang2022data,wu2022knowledge,yang2021extract,zhang2021graphless}, structural knowledge by preserving local and global topology (e.g., LSP, FreeKD, Alignahead, GNN-SD, CKD, G-CRD, ROD, MSKD, Cold Brew, GKD, NOSMOG) \cite{yang2020distilling,feng2022freekd,guo2022alignahead,chen2020self,wang2022collaborative,joshi2022representation,zhang2021rod,zhang2022multi,zheng2021cold,yang2022geometric,tian2023learning}, or intermediate embeddings by matching representations across layers (e.g., T2-GNN, SAIL, GraphAKD, RDD) \cite{huo2023t2,yu2022sail,he2022compressing,zhang2020reliable}. From a framework perspective, prior work includes teacher-free self-distillation (e.g., GNN-SD, CKD, RDD) and teacher–student paradigm, where pretrained GNNs transfer knowledge to either compact GNNs (e.g., TinyGNN, GraphAKD, LSP) or simpler models such as MLPs (e.g., CPF, GLNN, Cold Brew, NOSMOG).

\paragraph{Data distillation on graphs:} Graph data distillation is often referred to as graph condensation (GC). GC methods can be categorized by their objectives and aim to preserve task-specific performance, enabling GNNs trained on condensed graphs to match those trained on the original graph (e.g., GCond, SFGC, CTRL, OpenGC, GC-STNK, GCSR, KiDD) \cite{hashemi2024comprehensive,liu2023cat,zhang2024two,gao2024graph,zheng2023structure,liu2024graph}. Generalized GC methods, such as GDEM, SDDD, OpenGC, etc., focus on learning condensed graphs that generalize across different models and tasks by retaining essential structural and feature information \cite{liu2023graph,xu2023kernel,gao2024graph}.
DosCond, CaT, and EXGC seek to accelerate the condensation process by improving stages such as encoding, optimization, and generation \cite{10.1145/3534678.3539429, 10597970, 10.1145/3589334.3645551}. FGD and GCARe incorporate constraints or regularization to mitigate bias and promote equitable representations \cite{feng2023fair, Mao2023GCAReMS}. RobGC aims to filter noise and preserve core, causal information for reliable performance in real-world settings \cite{11002712}.

M2D differs from both knowledge and data distillation by enabling a trade-off between model and data complexity, allowing a simple model to recover a complex teacher’s behavior via an augmented graph. This positions M2D in a distinct, underexplored region of the model–data design space. Our approach differs from traditional graph rewiring, which typically modifies the topology to alleviate message-passing bottlenecks such as oversmoothing or oversquashing \cite{attali2024rewiring}. While prior work uses iterative structure learning to improve robustness \cite{chen2020iterative}, M2D instead iteratively distills teacher behavior into the graph, treating the data as a transparent medium for encoding the teacher's knowledge. 

%% file: files/method.tex
\section{Model to Data (M2D) Distillation}

\subsection{Problem Definition}
We propose Model-to-Data Distillation (M2D), a new distillation paradigm that transfers complexity from a Graph Neural Network (GNN) model to the data. 
M2D enriches the data so that a lightweight GNN can achieve performance comparable to a more complex one (see Figure \ref{fig:modeldataspace}). 

Let $\mathcal{G} = (\mathcal{V},\mathcal{E},\mathbf{X})$ be an undirected graph where $\mathcal{V}$ is the set of nodes, $\mathcal{E} \subseteq \mathcal{V} \times \mathcal{V}$ is the set of edges, and $\mathbf{X} \in \mathbb{R}^{n \times d}$ are the node attributes. The matrix $\mathbf{A} \in \mathbb{R}^{n \times n}$ is the adjacency matrix of $\mathcal{G}$ where $\mathbf{A}_{uv} = 1$ if there is an edge between $u$ and $v$ and $\mathbf{A}_{uv} = 0$, otherwise.
Let $f_T$ be the teacher model with parameter $\theta_T$ and $f_s$ be a student model with parameter $\theta_s$ where $|\theta_T| > |\theta_s|$. Moreover, let the prediction of the teacher model and the student model be $\hat{\mathbf{y}}_T$ and $\hat{\mathbf{y}}_s$, respectively. 

M2D learns a transformation $f_g$ with parameters $\theta_g$ that enriches the original graph data $\mathcal{G}$ by generating additional features or modifying the graph structure, yielding $\Tilde{\mathcal{G}} = ({\mathcal{V}},\tilde{\mathcal{E}},\tilde{\mathbf{X}})$. We learn the graph transformation $f_g$ and the student model $f_s$ jointly by optimizing the following objective:
\begin{equation}
\min_{\theta_g,\, \theta_s} \;
\mathcal{L}_{\mathrm{dis}}\!\left(f_T(\mathcal{G}), f_s(\tilde{\mathcal{G}})\right)
+ \mathcal{L}_{\mathrm{cls}}\!\left(f_s(\tilde{\mathcal{G}}), \mathbf{y}\right)
- \, \mathcal{S}\!\left(\mathcal{G}, \tilde{\mathcal{G}}\right),
\label{eq:objective}
\end{equation}
where  $\tilde{\mathcal{G}} = f_g(\mathcal{G})$ is the augmented graph, $\mathcal{L}_{\mathrm{dis}}$ is the distance between the outputs of the teacher model $f_T$ and student model $f_s$, $\mathcal{L}_{\mathrm{cls}}$ is the classification loss between true labels $\mathbf{y}$ and student output, and $\mathcal{S}$ is a function that computes the similarity between $\mathcal{G}$ and $\Tilde{\mathcal{G}}$. The term $\mathcal{S}(\mathcal{G}, \tilde{\mathcal{G}})$ enforces that the model knowledge is distilled into the data while introducing only minimal modifications to the original graph. In general, both the teacher model $f_T$ and the student model $f_s$ can be arbitrary architectures. In this work, however, we restrict our focus to GNNs and Graph Transformers (GTs) as teacher models and GNNs and MLPs as student models. As our motivation is to increase the transparency of the teacher model, our framework assumes only partial access to its logits. Let $\mathcal{V}_\mathrm{tr} \in \mathcal{V}$ denote the set of nodes for which both teacher logits and ground-truth labels are available.

\subsection{Proposed Model}
\label{sec:m2d}
 We propose a solution to the M2D optimization problem of Eq \ref{eq:objective}. The learning process alternates between (i) updating the graph topology and node features using current embeddings and (ii) updating embeddings using the augmented graph. Let the student GNN $f_s$ generate node embedding $\mathbf{H}_s \in \mathbb{R}^{n \times d_h}$. We decompose the transformation $f_g$ into two components: a feature Learner $f_\phi$ that learns new node features and a structure learner $f_a$ that learns new adjacency weights. Figure \ref{fig:flow} shows the flow diagram of the proposed model. 
\begin{figure}
    \centering
    \includegraphics[width=0.8\textwidth]{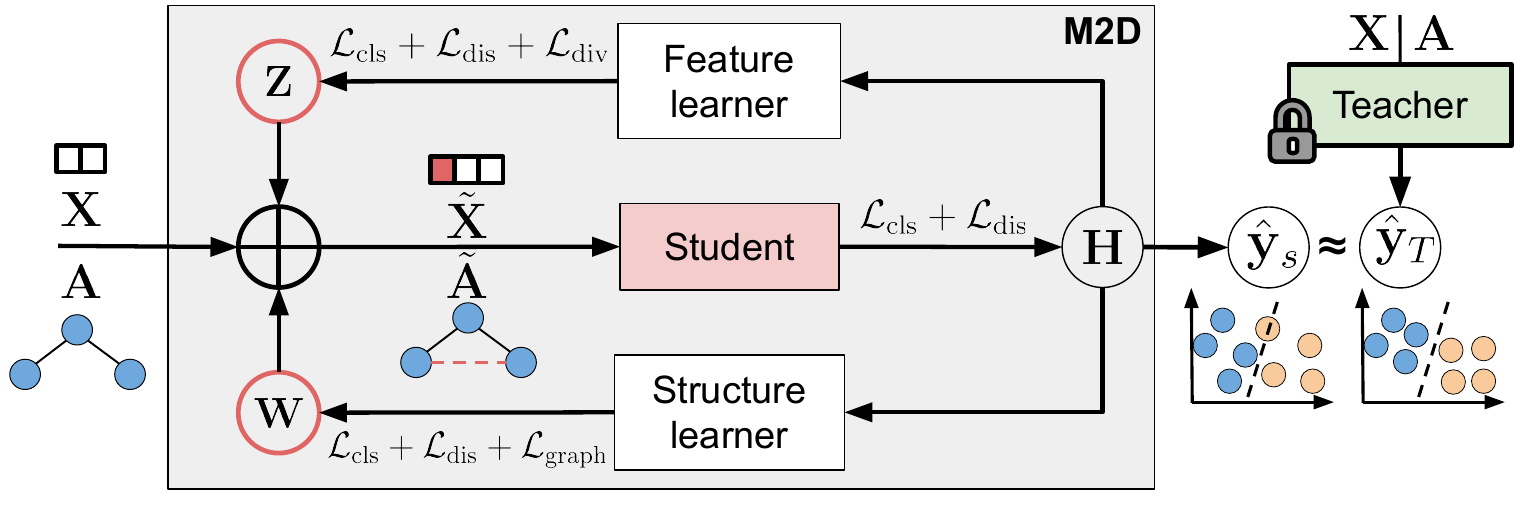}
    \caption{The feature and structure learners iteratively refine features and structure by optimizing $\mathcal{L}_{\mathrm{cls}}, \mathcal{L}_{\mathrm{dis}}, \mathcal{L}_{\mathrm{div}}$ and $\mathcal{L}_{\mathrm{cls}}, \mathcal{L}_{\mathrm{dis}}, \mathcal{L}_{\mathrm{graph}}$, respectively. The student model is then trained on the learned graph, optimizing $\mathcal{L}_{\mathrm{cls}}$ and $\mathcal{L}_{\mathrm{dis}}$ to align with the teacher while improving predictive performance.}
    \label{fig:flow}
\end{figure}

\begin{figure}[htbp]
    \centering
    \includegraphics[width=\textwidth]{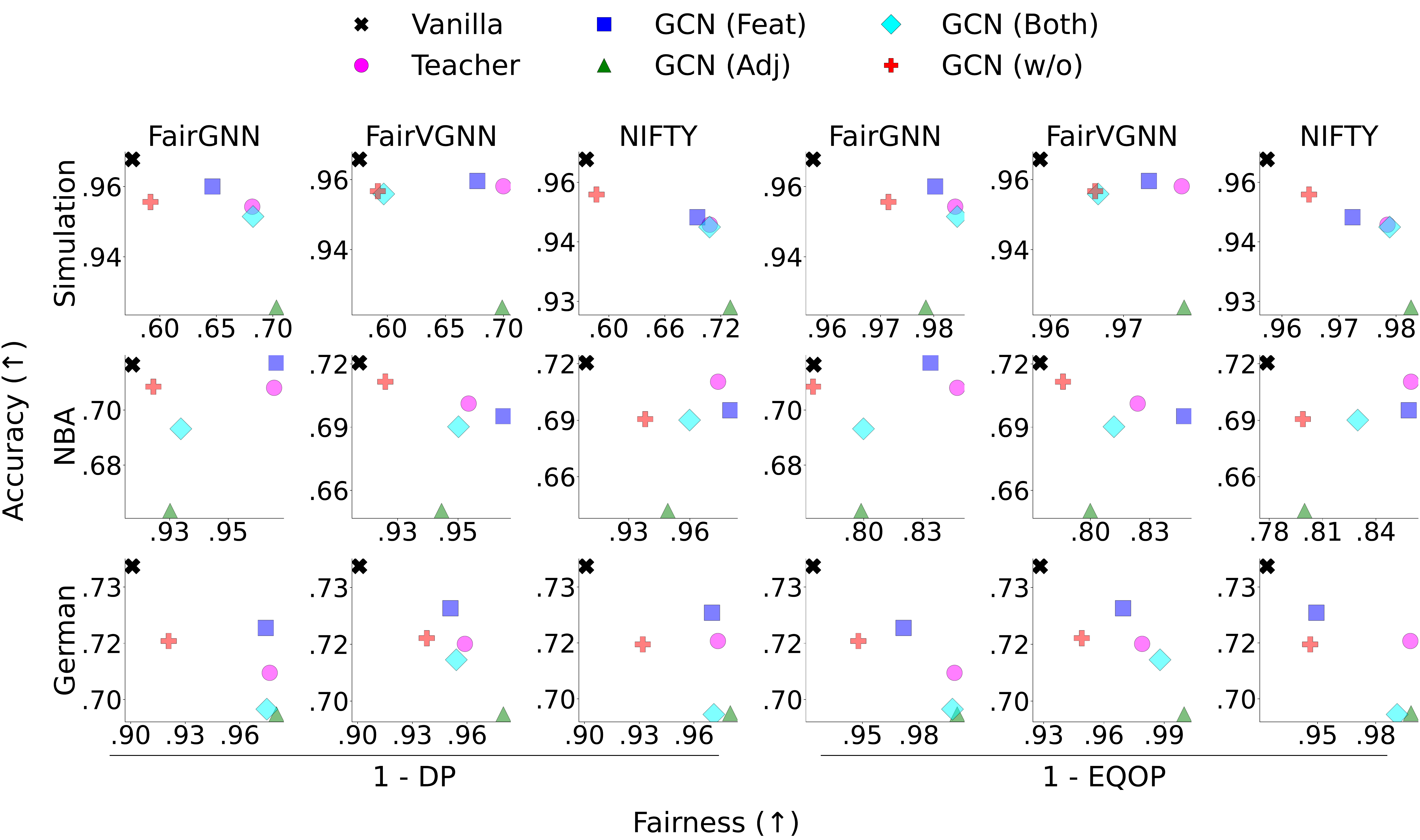}
    \caption{Accuracy–fairness trade-offs for FairGNN, FairVGNN, and NIFTY and their distilled variants. The y-axis shows accuracy, while the x-axis reports 1 - DP (first three columns) and 1 - EQOP (last three columns). Points closer to the top-right indicate better performance. Results highlight that bias in NBA is primarily feature-driven, whereas synthetic and German datasets exhibit bias in both features and topology, requiring joint distillation to recover similar fairness.}
    \label{fig:fairness_accu}
\end{figure}

\subsection{Learning new node features and adjacency weights}

\paragraph{Feature Learner:} Let $f_\phi:\mathbb{R}^{d_h}\rightarrow\mathbb{R}^{d_f}$ be a function that learns new features of dimension $d_f$ and constructs new feature matrix $\tilde{\mathbf{X}}$ at time $t$ from node embeddings $\mathbf{H}_s$ at time $t-1$ as follows:
\begin{equation*}
    \mathbf{Z}^t = f_\phi (\mathbf{H}^{t -1 }_s) \in \mathbb{R}^{n \times d_f}, \quad
    \tilde{\mathbf{X}}^t = [\mathbf{X} \mathbin\Vert \mathbf{Z}^t] \in \mathbb{R}^{n \times d+d_f}.
\end{equation*}
The initial node embeddings $\mathbf{H}^0_s$ are initialized as $f_s(\mathbf{X}, \mathbf{A})$. 
\paragraph{Structure Learner:} We learn a new graph structure by predicting pairwise node relationships from their embeddings. Given node representations $\mathbf{H}^{t -1 }_s$, let $f_a:\mathbb{R}^{2d_h}\rightarrow[0,1]$ be a function that learns a new weighted adjacency matrix. The learned matrix is defined element-wise as:
\begin{align*}
\mathbf{W}^t_{uv} &= f_a\big(\mathbf{h}^{t -1 }_{s,u}, \mathbf{h}^{t -1 }_{s,v}\big), 
\quad \mathbf{W}^t \in [0,1]^{n \times n},
\end{align*}
where $\mathbf{h}^{t -1 }_{s,u}$ and $\mathbf{h}^{t -1 }_{s,v}$ are embeddings of node $u$ and $v$ at time $t-1$.
For directed graphs, $f_a$ operates on the concatenation of node embeddings:
\begin{align*}
\mathbf{W}^t_{uv} &= f_a\big([\mathbf{h}^{t -1 }_{s,u} \,\Vert\, \mathbf{h}^{t -1 }_{s,v}]\big).
\end{align*}

For undirected graphs, we enforce permutation invariance by symmetrizing the input:
\begin{align*}
\mathbf{W}^t_{uv} &= f_a\big(\mathbf{h}^{t -1 }_{s,u} + \mathbf{h}^{t -1 }_{s,v} \,\Vert\, ~ 
\lvert \mathbf{h}^{t -1 }_{s,u} - \mathbf{h}^{t -1 }_{s,v} \rvert \big),
\end{align*}
which ensures $\mathbf{W}^t_{uv} = \mathbf{W}^t_{vu}$. The final weighted adjacency matrix at iteration $t$ is a combination of the original graph with trained and untrained weights:

\begin{align*}
    \tilde{\mathbf{A}}^t_{uv} = (1 - \gamma) \mathbf{A}_{uv} + \gamma ((1 - \beta)\mathbf{W}^t_{uv} + \beta s(\mathbf{x}_u,\mathbf{x}_v)), 
\end{align*}
where $\gamma$ and $\beta$ are hyperparameters, and $s(\mathbf{x}_u,\mathbf{x}_v)$ is the cosine similarity between $u$ and $v$ based on original features. We then use $\tilde{\mathbf{X}}^t$ and $\tilde{\mathbf{A}}^t$ to generate representations $\mathbf{H}^t_s = f_s (\tilde{\mathbf{X}}^t,\tilde{\mathbf{A}}^t)$ at iteration $t$.
The final graph is generated either when $\mathcal{L}_{\mathrm{cls}}$ converges or after a fixed number of iterations.
\subsection{Training Objective}
\label{sec::training}

Let $\tilde{\mathcal{G}}^t\!=\!(\mathcal{V},{\tilde{\mathbf{A}}}^t,\tilde{\mathbf{X}}^t)$ be the augmented graph at time $t$. The student encoder $f_s$ produces node representations $\mathbf{H}^t_s\!=\!f_s({\tilde{\mathbf{A}}}^t,\tilde{\mathbf{X}}^t)$, which are mapped to class logits $\hat{\mathbf{y}}^t_s\!=\!f_{c}(\mathbf{H}^t_s)$ using a classifier $f_c$. We partition the model parameters into groups
$\Theta_s\!=\!\{\theta_s,\theta_c\}$ and $\Theta_g\!=\!\{\theta_\phi,\theta_a\}$, where $\Theta_s$ denotes the student encoder and classifier parameters and $\Theta_g$ denotes the graph augmentation parameters. 

\paragraph{Classification loss $\mathcal{L}_{\mathrm{cls}}$:} We use the standard supervised cross-entropy loss.
\begin{align}
\mathcal{L}_{\mathrm{cls}} 
= -\frac{1}{|\mathcal{V}_{\mathrm{tr}}|}
\sum_{u \in \mathcal{V}_{\mathrm{tr}}}
\mathbf{y}_u^\top \log \hat{\mathbf{y}}^t_{s,u}.
\end{align}

\paragraph{Distillation loss $\mathcal{L}_{\mathrm{dis}}$:}
We align student and teacher predictive distributions using temperature-scaled soft targets via the Kullback–Leibler (KL) Divergence. Let the teacher's logits be $\hat{\mathbf{y}}_T$, then:
\begin{align}
    \mathcal{L}_{\mathrm{dis}} = \tau^2 \frac{1}{|\mathcal{V}_{\mathrm{tr}}|}\sum_{u \in \mathcal{V}_{\mathrm{tr}}}\text{KL}\left(\frac{\hat{\mathbf{y}}_{T,u}}{\tau} \middle|\middle| \frac{\hat{\mathbf{y}}^t_{s,u}}{\tau}\right).
\end{align}
where $\tau$ smooths the distributions, exposing inter-class similarity and teacher uncertainty that are not captured by teacher logits \cite{hinton2015distilling}. The loss is multiplied by $\tau^2$ to preserve gradient magnitudes.
\begin{figure}[htbp]
    \centering
    \includegraphics[width=0.9\textwidth]{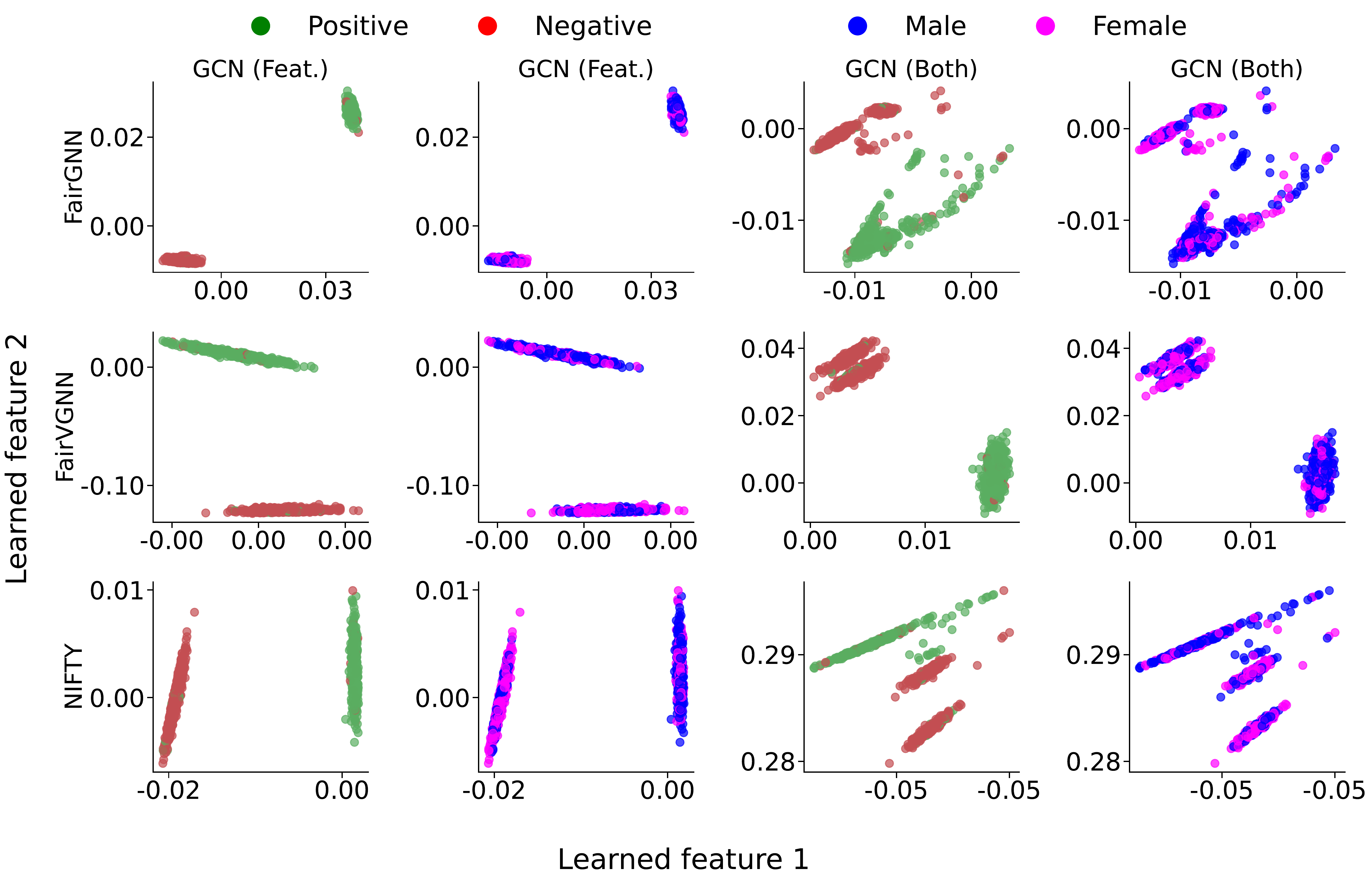}
    \caption{Learned features colored by class label and sensitive attribute for the German dataset. Each row corresponds to a fair GNN with points colored by label (green/red) and sensitive attribute (blue/magenta). Learned features exhibit clearer class separation while mixing the sensitive attribute.}
    \label{fig:fair_feat}
\end{figure}
\paragraph{Feature diversity regularization $\mathcal{L}_{\mathrm{div}}$:}
To induce new features to capture complementary information and avoid redundancy with existing ones, we penalize pairwise correlations involving the newly generated feature dimensions. We first normalize each feature dimension across nodes: ${\mathbf{X}}^t(:,j) = \frac{\tilde{\mathbf{X}}^t(:,j)}{|| \tilde{\mathbf{X}}^t(:,j) ||_2}, ~ j = 1,...d+d_f$. The feature correlation matrix is then defined as $\mathbf{K}^t = ({\mathbf{X}}^t)^T{\mathbf{X}}^t$. Thus, $\mathbf{K}^t_{ij}$ measures the cosine similarity between feature dimensions $i$ and $j$. To regularize only the generated features, we define a binary mask $\mathbf{M} \in \{0,1\}^{(d+d_f) \times (d+d_f)}$ such that $\mathbf{M}_{ij} = \mathbf{1}
\left[i>d\;\vee\;j>d\right]\cdot\mathbf{1}[i\neq j]$. Therefore, $\mathbf{M}_{ij} = 1$ whenever at least one of the two dimensions corresponds to a generated feature, excluding diagonal self-correlations. The diversity regularization loss is computed as follows:
\begin{align}
    \mathcal{L}_{\mathrm{div}} = \lambda_{\mathrm{div}}\frac{\|\mathbf{M}\odot \mathbf{K}^t\|_F^2}{(d+d_f)^2}.
\end{align}
where $\odot$ is the element-wise product and $\lambda_{\mathrm{div}}$ is a hyperparameter.

\paragraph{Graph regularization loss, $\mathcal{L}_{\mathrm{graph}}$:}
Following \cite{kalofolias2016learn}, we impose additional constraints on $\mathbf{W}^t$ through our graph structure loss $\mathcal{L}_{\mathrm{graph}}$:
\begin{align}
    \mathcal{L}_{\mathrm{graph}} = -\frac{\lambda_{\mathrm{deg}}}{n}\mathbf{1}^T\log(\mathbf{W}^t\mathbf{1}) + \frac{\lambda_{\mathrm{sparse}}}{n^2}|| \mathbf{W}^t||^2_F.
\end{align}
where the first term penalizes zero-degree nodes and the second term enforces sparsity on the learned adjacency, encouraging the augmented graph to remain close to the original topology. 

The structural and diversity regularization terms are optimized only with respect to the graph augmentation parameters $\Theta_g$. Therefore, their gradients with respect to the parameters $\Theta_s$ are constrained to vanish: $\Delta_{\Theta_s} \mathcal{L}_{\mathrm{div}} = 0$ and $\Delta_{\Theta_s} \mathcal{L}_{\mathrm{graph}} = 0$. At iteration $t$, the student parameters $\Theta_s$ are optimized using the supervised objective comprising the classification and distillation losses, whereas the graph modifier parameters $\Theta_g$ are optimized using both the supervised objective and the structural regularization terms, including the graph and feature diversity losses:
\begin{align*}
    \Theta_s^* &= \arg\min_{\Theta_s} ~(1 - \lambda{_\mathrm{dis}})\mathcal{L}_{\mathrm{cls}} + \lambda_{\mathrm{dis}}\mathcal{L}_{\mathrm{dis}}. \\
    \Theta_g^* &= \arg\min_{\Theta_g} ~(1 - \lambda{_\mathrm{dis}})\mathcal{L}_{\mathrm{cls}} + \lambda_{\mathrm{dis}}\mathcal{L}_{\mathrm{dis}} + \mathcal{L}_{\mathrm{div}} + \mathcal{L}_{\mathrm{graph}}. 
\end{align*}
where $\lambda_{\mathrm{dis}}$ is a hyperparameter that regulates the importance of the distillation loss.

\section{Theoretical Analysis of M2D Transparency}
While structural graph modifications are relatively easy to audit, continuous synthetic features are often less transparent. To better understand the learned features, we present Theorems \ref{the:fair}, \ref{the:feature_weight}, and Corollary \ref{cor:atten_sim}. Theorem \ref{the:fair} shows that fairness in the teacher representations can be transferred to the distilled features. Theorem \ref{the:feature_weight} proves that the M2D learns features that encode the teacher’s attention-weighted neighborhood aggregation, while Corollary \ref{cor:atten_sim} shows that higher teacher attention leads to higher similarity in the distilled feature space. We focus on feature augmentations because they are harder to audit than structural modifications, though similar analyses can also be derived for learned structure. All proofs are provided in the Appendix.
\begin{theorem}
Assume teacher representations $\mathbf{H}_T$ satisfy demographic parity with respect to the sensitive attribute $\mathbf{s}$, i.e., $\mathbf{H}_T \perp \mathbf{s}$, $\| \hat{\mathbf{y}}_s - \hat{\mathbf{y}}_T\|_2^2$ $\approx 0$, linear teacher and student prediction heads, and the feature map $\phi: \mathcal{H}_s \to \mathcal{Z}$ is a Borel measurable function. Then M2D distillation can learn features $\mathbf{Z} = \phi(\mathbf{H}_s)$, such that $\mathbf{Z} \perp \mathbf{s}$.
\label{the:fair}
\end{theorem}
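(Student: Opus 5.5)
The plan is to chain three facts. Independence passes through measurable maps. The near-zero distillation error forces the student outputs to coincide with the teacher outputs. Under linear heads, this coincidence lets the student representation be written as a measurable function of the teacher representation, up to a component that the feature map can discard.

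\textbf{Step 1: lemma on measurable maps.} I first record a standard fact: if $\mathbf{U} \perp \mathbf{s}$ and $g$ is Borel measurable, then $g(\mathbf{U}) \perp \mathbf{s}$. The proof is one line. For Borel sets $B$ and $C$, $\{g(\mathbf{U}) \in B\} = \{\mathbf{U} \in g^{-1}(B)\}$ and $g^{-1}(B)$ is Borel. Hence
\[
P(g(\mathbf{U})\in B,\mathbf{s}\in C)=P(\mathbf{U}\in g^{-1}(B))\,P(\mathbf{s}\in C)=P(g(\mathbf{U})\in B)\,P(\mathbf{s}\in C).
\]
Applying this to the teacher's linear head gives $\hat{\mathbf{y}}_T = \mathbf{W}_T\mathbf{H}_T + \mathbf{b}_T \perp \mathbf{s}$.

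\textbf{Step 2: transfer to the student's outputs.} Read $\|\hat{\mathbf{y}}_s-\hat{\mathbf{y}}_T\|_2^2\approx 0$ as exact equality almost surely in the idealized limit of the distillation objective $\mathcal{L}_{\mathrm{dis}}$. Then $\hat{\mathbf{y}}_s = \hat{\mathbf{y}}_T$ a.s., so $\hat{\mathbf{y}}_s \perp \mathbf{s}$. With the student's linear head $\hat{\mathbf{y}}_s=\mathbf{W}_s\mathbf{H}_s+\mathbf{b}_s$, this gives $\mathbf{W}_s\mathbf{H}_s \perp \mathbf{s}$. In other words, the projection of $\mathbf{H}_s$ onto the row space of $\mathbf{W}_s$ is fair. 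It is given by $P\mathbf{H}_s$ with $P=\mathbf{W}_s^{+}\mathbf{W}_s$, which is a measurable (linear) function of $\hat{\mathbf{y}}_s-\mathbf{b}_s$.

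\textbf{Step 3: construct the feature map.} Define $\phi(\mathbf{h}) = \psi(\mathbf{W}_s\mathbf{h}+\mathbf{b}_s)$ for any Borel $\psi$; the simplest choice is $\phi(\mathbf{h})=P\mathbf{h}$ or $\phi(\mathbf{h})=\mathbf{W}_s\mathbf{h}+\mathbf{b}_s$. This $\phi$ is a composition of Borel maps, hence Borel. Then $\mathbf{Z}=\phi(\mathbf{H}_s)=\psi(\hat{\mathbf{y}}_s)$, and Step 1 gives $\mathbf{Z}\perp\mathbf{s}$. To say that M2D \emph{can learn} such a $\phi$, I note that the feature learner $f_\phi$ is an MLP and that this class contains (or universally approximates) linear maps. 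The theorem only asserts existence within the learnable class, so exhibiting this $\phi$ suffices.

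\textbf{Main obstacle.} The hard step is Step 2. The hypotheses give independence only at the output level, so the conclusion holds only for features that factor through the prediction head; the component of $\mathbf{H}_s$ in $\ker\mathbf{W}_s$ may still carry $\mathbf{s}$. I would therefore state the construction explicitly through the head, not claim it for arbitrary $\phi$.

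The ``$\approx$'' also needs handling. I would treat it as exact equality a.s. and remark that, in the approximate case, the dependence is controlled by the residual $\hat{\mathbf{y}}_s-\hat{\mathbf{y}}_T$ (for instance via a total-variation or mutual-information bound that vanishes as the distillation loss goes to zero). I would only sketch this extension rather than prove it.
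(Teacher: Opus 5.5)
Your argument is correct if the statement is read existentially in $\phi$, but it departs from the paper's proof at exactly the step you flag as the obstacle.

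\textbf{How the paper handles that step.} The paper does not restrict $\phi$. It treats $\mathbf{H}_s$ as the unknown in $\min_{\mathbf{H}_s}\|\mathbf{U}_s\mathbf{H}_s-\mathbf{U}_T\mathbf{H}_T\|_F^2$. It then appeals to the implicit bias of gradient descent from zero initialization, which converges to the minimum-norm solution, to get $\mathbf{H}_s=\mathbf{U}_s^\dagger\mathbf{U}_T\mathbf{H}_T$. This removes the $\ker\mathbf{U}_s$ component entirely. The whole student representation is then a linear function of $\mathbf{H}_T$, hence independent of $\mathbf{s}$, and \emph{every} Borel $f_\phi$ gives fair features.

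That minimum-norm representation is exactly your $P\mathbf{H}_s=\mathbf{W}_s^{+}(\hat{\mathbf{y}}_s-\mathbf{b}_s)$ once $\hat{\mathbf{y}}_s=\hat{\mathbf{y}}_T$. You build the projection into $\phi$; the paper assumes the optimizer delivers it.

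\textbf{What your route buys.} It uses only the stated hypotheses. The implicit-bias step is an extra assumption, and only a heuristic one here. $\mathbf{H}_s$ is the output of a parameterized GNN with a jointly trained head, not a free variable optimized from zero. Your worry is also justified: without that step, an $\mathbf{s}$-dependent component in $\ker\mathbf{W}_s$ can survive.

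\textbf{What the paper's route buys.} It gives a stronger conclusion: it certifies an arbitrary Borel feature learner applied to the full embedding. You certify only features that are functions of the student's own logits, which says little about what a generic learned $f_\phi$ produces. The same $\mathbf{n}_i=\mathbf{0}$ device is reused in the proof of Corollary~\ref{cor:atten_sim}. If the theorem is read as quantifying over a given Borel $\phi$, you would need to add the paper's minimum-norm step.
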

\begin{theorem}
Let the teacher be a GAT or GT with attention coefficients $\alpha_{ij}$. 
Assume linear teacher and student prediction heads, and sufficiently small distillation error. 
Then, for each node $i$, there exists a learned feature vector $\mathbf{z}_i$ such that $\mathbf{z}_i = f_\phi\!\left(
\sum_{j \in \mathcal{N}(i)} \alpha_{ij} \mathbf{M} \mathbf{x}_j 
+ \mathbf{r}_i + \mathbf{n}_i
\right),$ where $\mathbf{M}$ is a linear transformation, $\mathbf{r}_i$ captures the residual term of the linear approximation of the teacher and the distillation error, 
and $\mathbf{n}_i \in \mathrm{Null}(\mathbf{U}_s)$ lies in the null space of the student prediction head.
\label{the:feature_weight}
\end{theorem}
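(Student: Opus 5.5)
The plan is to go from the teacher's logits back to the student embeddings, and then push those embeddings through the feature learner $f_\phi$.

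\textbf{Step 1: linearize the teacher.} For a GAT, the last attention layer gives $\mathbf{h}_{T,i} = \sigma\big(\sum_{j\in\mathcal{N}(i)} \alpha_{ij}\mathbf{W}_T \mathbf{h}^{(L-1)}_{T,j}\big)$. For a GT the sum runs over all $j$, with $\alpha_{ij}$ the self-attention weights. I will approximate this by $\mathbf{h}_{T,i} \approx \sum_{j} \alpha_{ij}\mathbf{M}_T\mathbf{x}_j$. To get there, I linearize $\sigma$ and the earlier layers: take a first-order expansion, or view the composed map from $\mathbf{x}_j$ to the message as a linear map $\mathbf{M}_T$ plus a remainder. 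The remainder is collected into a term $\mathbf{e}_i$, which will later become part of $\mathbf{r}_i$. With the linear head $\mathbf{U}_T$, the teacher logits are
\[
\hat{\mathbf{y}}_{T,i} = \mathbf{U}_T\sum_j \alpha_{ij}\mathbf{M}_T\mathbf{x}_j + \mathbf{U}_T\mathbf{e}_i .
\]

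\textbf{Step 2: use the small distillation error.} Write $\hat{\mathbf{y}}_{s,i} = \mathbf{U}_s\mathbf{h}_{s,i} = \hat{\mathbf{y}}_{T,i} + \boldsymbol{\delta}_i$ with $\|\boldsymbol{\delta}_i\|$ small. Let $\mathbf{U}_s^{+}$ be the Moore--Penrose pseudoinverse. Every solution of this linear system has the form
\[
\mathbf{h}_{s,i} = \mathbf{U}_s^{+}(\hat{\mathbf{y}}_{T,i}+\boldsymbol{\delta}_i) + \mathbf{n}_i, \qquad \mathbf{n}_i \in \mathrm{Null}(\mathbf{U}_s).
\]
This requires $\hat{\mathbf{y}}_{T,i}+\boldsymbol{\delta}_i$ to lie in the range of $\mathbf{U}_s$, which holds because it equals $\mathbf{U}_s\mathbf{h}_{s,i}$ by definition. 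Substituting Step 1 gives
\[
\mathbf{h}_{s,i} = \sum_j \alpha_{ij}\mathbf{M}\mathbf{x}_j + \mathbf{r}_i + \mathbf{n}_i,
\]
with $\mathbf{M} = \mathbf{U}_s^{+}\mathbf{U}_T\mathbf{M}_T$ and $\mathbf{r}_i = \mathbf{U}_s^{+}(\mathbf{U}_T\mathbf{e}_i + \boldsymbol{\delta}_i)$. The pull-back of $\alpha_{ij}$ is exact: it is linear in the logits, and the $\alpha_{ij}$ are scalars that commute with $\mathbf{U}_s^{+}\mathbf{U}_T$.

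\textbf{Step 3: apply the feature learner.} The feature learner is defined by $\mathbf{z}_i = f_\phi(\mathbf{h}_{s,i})$, evaluated at the converged iterate where $\mathbf{H}_s^{t-1}\approx\mathbf{H}_s^t$. Substituting Step 2 gives the claimed form. The statement is existential ("there exists a learned feature vector"), so it is enough that the converged student satisfies the distillation hypothesis. Then $\mathbf{z}_i$ as produced by M2D has this representation.

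\textbf{Main obstacle: Step 1.} Making the linearization honest is the hard part. For a multi-layer GAT, earlier-layer attention mixes multi-hop neighborhoods, so $\sum_j \alpha_{ij}\mathbf{M}_T\mathbf{x}_j$ only captures the last layer's attention. I would handle this by defining $\mathbf{e}_i$ to absorb everything else: the nonlinearity, the multi-hop contributions, and, for GTs, the residual connections and layer norms. The price is that $\mathbf{r}_i$ is small only when the teacher is close to its linearization; I would state that as the interpretation of $\mathbf{r}_i$ rather than bound it. A secondary issue is the fixed-point mismatch between $t-1$ and $t$. I would either assume convergence, or put $\mathbf{h}_{s,i}^{t-1}-\mathbf{h}_{s,i}^{t}$ into $\mathbf{r}_i$ as well.
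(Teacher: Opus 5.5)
Your proposal is correct and follows the paper's proof essentially step for step. The paper linearizes a single-layer GAT via a first-order expansion of $\sigma$ to get $\mathbf{h}_{T,i}=\sum_j\alpha_{ij}\tilde{\mathbf{W}}\mathbf{x}_j+\mathbf{b}_i$, where $\tilde{\mathbf{W}}$ and $\mathbf{b}_i$ play the roles of your $\mathbf{M}_T$ and $\mathbf{e}_i$. It then solves $\mathbf{U}_s\mathbf{h}_{s,i}=\mathbf{U}_T\mathbf{h}_{T,i}+\boldsymbol{\epsilon}_i$ with the pseudoinverse plus a null-space term, sets $\mathbf{M}=\mathbf{U}_s^\dagger\mathbf{U}_T\tilde{\mathbf{W}}$ and $\mathbf{r}_i=\mathbf{U}_s^\dagger(\mathbf{U}_T\mathbf{b}_i+\boldsymbol{\epsilon}_i)$, and applies $f_\phi$. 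Your extra points (the range condition, absorbing multi-layer and GT effects into the remainder, and the $t-1$ versus $t$ iterate mismatch) are not addressed in the paper, which treats only the single-layer GAT and asserts the GT case is similar; they make the same argument more careful rather than different.
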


\begin{corollary}
Under the conditions of Theorem~\ref{the:feature_weight}, let $\mathbf{H}_T$ be teacher representations from a GAT or GT with attention coefficients $\alpha$. Assume a homophilic graph and a monotone function $\psi$ such that $\mathrm{sim}(\mathbf{h}_{T,i}, \mathbf{h}_{T,j}) = \psi(e_{ij})$, where $e_{ij}$ are pre-softmax attention scores. Then, for any $j,k \in \mathcal{N}(i)$,
$\alpha_{ij} > \alpha_{ik}$
implies
$\mathrm{sim}(\mathbf{h}_{T,i}, \mathbf{h}_{T,j}) > \mathrm{sim}(\mathbf{h}_{T,i}, \mathbf{h}_{T,k}).$ Further, if $\mathbf{r}_i \approx 0$ and $f_\phi$ is order-preserving with respect to $\mathrm{sim}(\cdot,\cdot)$, then $\mathrm{sim}(\mathbf{z}_i, \mathbf{z}_j) > \mathrm{sim}(\mathbf{z}_i, \mathbf{z}_k).$
\label{cor:atten_sim}
\end{corollary}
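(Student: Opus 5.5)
The corollary has two parts, and I would prove them in order. The first is a purely teacher-side monotonicity statement. The second transfers that ordering to the distilled features through Theorem~\ref{the:feature_weight}.

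\textbf{Step 1 (teacher ordering).} For a GAT or GT, the attention coefficients are a softmax of the pre-softmax scores over the neighborhood,
\[
\alpha_{ij} = \frac{\exp(e_{ij})}{\sum_{l\in\mathcal{N}(i)}\exp(e_{il})}.
\]
For fixed $i$ the denominator is common to every $j\in\mathcal{N}(i)$, and $\exp$ is strictly increasing. Hence $\alpha_{ij}>\alpha_{ik}$ if and only if $e_{ij}>e_{ik}$. We then take $\psi$ to be strictly monotone increasing; this is the reading consistent with homophily, since a higher score goes with a more similar neighbor. Applying $\psi$ to $e_{ij}>e_{ik}$ gives $\mathrm{sim}(\mathbf{h}_{T,i},\mathbf{h}_{T,j})=\psi(e_{ij})>\psi(e_{ik})=\mathrm{sim}(\mathbf{h}_{T,i},\mathbf{h}_{T,k})$. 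I would state explicitly that homophily is what justifies an increasing $\psi$ rather than a decreasing one.

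\textbf{Step 2 (transfer to $\mathbf{z}$).}
\begin{itemize}
\item By Theorem~\ref{the:feature_weight}, $\mathbf{z}_i=f_\phi(\mathbf{g}_i+\mathbf{r}_i+\mathbf{n}_i)$ with $\mathbf{g}_i=\sum_{j}\alpha_{ij}\mathbf{M}\mathbf{x}_j$.
\item Under the linear-head approximation used in that theorem, $\mathbf{g}_i$ is the linear proxy of the teacher representation $\mathbf{h}_{T,i}$.
\item With $\mathbf{r}_i\approx 0$, the argument of $f_\phi$ is $\mathbf{h}_{T,i}+\mathbf{n}_i$, where $\mathbf{n}_i\in\mathrm{Null}(\mathbf{U}_s)$.
\item I would interpret ``order-preserving with respect to $\mathrm{sim}$'' as follows: whenever $\mathrm{sim}(\mathbf{a},\mathbf{b})>\mathrm{sim}(\mathbf{a},\mathbf{c})$ on the relevant inputs, the same strict inequality holds for $f_\phi(\mathbf{a}),f_\phi(\mathbf{b}),f_\phi(\mathbf{c})$.
\item Applying this to the inputs of $\mathbf{z}_i,\mathbf{z}_j,\mathbf{z}_k$ and chaining with Step 1 gives $\mathrm{sim}(\mathbf{z}_i,\mathbf{z}_j)>\mathrm{sim}(\mathbf{z}_i,\mathbf{z}_k)$.
\end{itemize}

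\textbf{Main obstacle: the null-space components $\mathbf{n}_i$.} These are unconstrained by distillation and could in principle reorder similarities. I would first try to absorb them into the order-preservation hypothesis. Concretely, I would either assume $f_\phi$ is order-preserving on the equivalence classes modulo $\mathrm{Null}(\mathbf{U}_s)$, or take the minimal-norm solution $\mathbf{n}_i=0$, which M2D can learn since Theorem~\ref{the:feature_weight} only asserts existence.

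Failing that, I would use a continuity and margin argument. The inequality in Step 1 is strict, and $\mathrm{sim}$ is continuous. So there is a gap $\delta=\psi(e_{ij})-\psi(e_{ik})>0$, and any perturbation $\mathbf{r}+\mathbf{n}$ small enough relative to $\delta$ leaves the ordering intact before $f_\phi$ is applied. Order preservation then finishes the proof. This also makes precise what ``$\mathbf{r}_i\approx 0$'' must mean quantitatively: it must be small compared with the attention-induced similarity gap.
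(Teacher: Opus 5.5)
Your Step 1 matches the paper's argument: a common softmax denominator, then a monotone $\psi$, with homophily fixing its direction. Your first option for $\mathbf{n}_i$, taking the minimum-norm solution $\mathbf{n}_i=\mathbf{0}$, is also what the paper does; it justifies this by the implicit bias of gradient descent from zero initialization.

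The gap is the bullet where you identify the argument of $f_\phi$ with $\mathbf{h}_{T,i}+\mathbf{n}_i$. Distillation only matches logits, $\mathbf{U}_s\mathbf{h}_{s,i}\approx\mathbf{U}_T\mathbf{h}_{T,i}$. So the student embedding is at best a linear image of the teacher embedding, not the teacher embedding itself, and it need not even have the same dimension. Concretely, the proof of Theorem~\ref{the:feature_weight} has $\mathbf{M}=\mathbf{U}_s^\dagger\mathbf{U}_T\tilde{\mathbf{W}}$. With $\mathbf{r}_i\approx 0$ and $\mathbf{n}_i=\mathbf{0}$, the input to $f_\phi$ is therefore $\mathbf{u}_i=\sum_{j}\alpha_{ij}\mathbf{M}\mathbf{x}_j\approx\mathbf{U}_s^\dagger\mathbf{U}_T\mathbf{h}_{T,i}$. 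Step 1 orders $\mathrm{sim}(\mathbf{h}_{T,i},\cdot)$, but order preservation of $f_\phi$ needs an ordering of $\mathrm{sim}(\mathbf{u}_i,\cdot)$. Your chain has no step connecting the two.

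That connection is not automatic. The map $\mathbf{U}_s^\dagger\mathbf{U}_T$ factors through the logit space, so its rank is at most the number of classes and it annihilates every component of $\mathbf{h}_T$ in $\mathrm{Null}(\mathbf{U}_T)$. Cosine or RBF similarity orderings are guaranteed to survive orthogonal maps and uniform rescalings, but not such low-rank projections. For example, dropping the third coordinate of $(1,0,0)$, $(1,1,0)$, $(1,0,1.5)$ reverses both the cosine and the Euclidean ordering.

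The paper closes this step with an explicit additional hypothesis: $\mathrm{sim}$ is order-preserving under the linear map $\mathbf{U}_s^\dagger\mathbf{U}_T$, applied before order preservation of $f_\phi$. Equivalently, you could read the order-preservation hypothesis as applying to the composite $f_\phi\circ\mathbf{U}_s^\dagger\mathbf{U}_T$ on teacher representations. Your continuity/margin fallback does not repair this. It only protects an ordering the unperturbed inputs already have, which is exactly the missing step. It also cannot absorb $\mathbf{n}$, since null-space components are unconstrained by distillation and need not be small.
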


%% file: files/result.tex
\section{Experiments and Results}

We evaluate how M2D enhances GNN transparency, focusing on fair GNNs, Graph Attention Networks (GAT), and Graph Transformers (GT). Our goal is to analyze how model complexity and fairness mechanisms transfer into the data space.

\subsection{Experimental Setup}
We consider three variants of M2D: feature-only (feat), adjacency-only (adj), and joint feature–adjacency (both). We additionally evaluate a baseline where the teacher is distilled into the student without M2D (w/o). Both the feature learner $f_\phi$ and the adjacency learner $f_a$ are implemented as MLPs. Additional details about the experimental setup are provided in the Appendix.

\subsection{Fair GNN Analysis}
We apply M2D to the fairness mechanisms of three fair GNN architectures, FairGNN, FairVGNN, and NIFTY \cite{dai2021say,wang2022improving,agarwal2021towards} as teacher models, with a GCN as the student. We evaluate utility via accuracy and fairness using Demographic Parity (DP) and Equal Opportunity (EQOP). Details on datasets, baselines, hyperparameters, and evaluation protocols are provided in the Appendix.

\begin{figure}[htbp]
    \centering
    \includegraphics[width=0.9\textwidth]{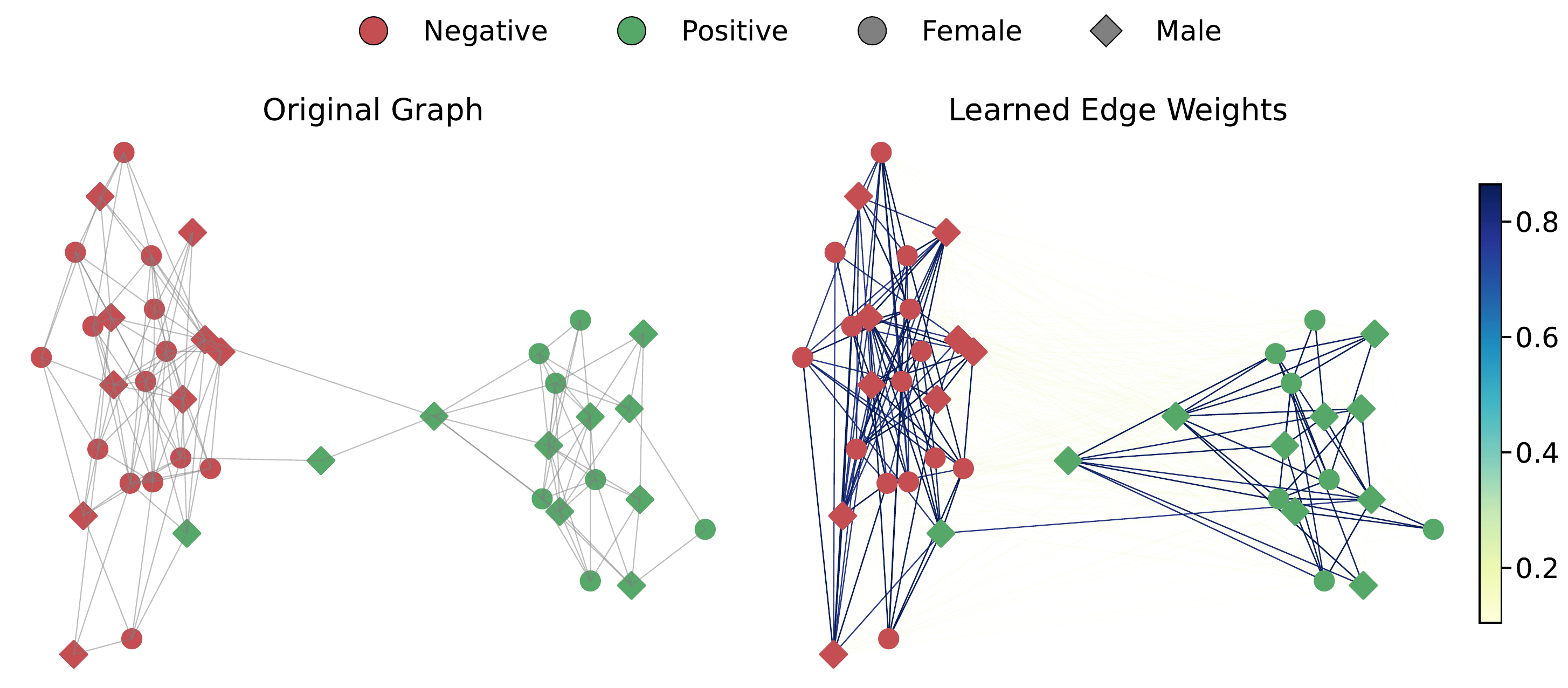}
    \caption{Original and augmented structure for a sample of 40 nodes from the German dataset, highlighting edges that are modified for fair prediction. Edge weights are encoded by color and width. Node colors denote class labels, and shapes represent sensitive attributes.}
    \label{fig:fair_adj}
\end{figure}

Figure \ref{fig:fairness_accu} shows the accuracy–fairness trade-off for each model and its distilled variants. The y-axis reports accuracy, while the x-axis shows $1\!-\!\text{DP}$ (first three columns) and $1\!-\!\text{EQOP}$ (last three). Points closer to the top-right indicate a better trade-off. In the \textsc{NBA} dataset, bias is mainly feature-driven, as fairness improves with feature-only distillation. In contrast, the \textsc{simulation} and the \textsc{German} datasets exhibit bias in both features and structure, requiring joint feature–adjacency distillation.

We further analyze the learned features and structure using GCN (Both) on the German dataset. For each teacher, we learn two features and visualize them in Figure \ref{fig:fair_feat}. The features exhibit clear class separation while mixing sensitive groups. Figure \ref{fig:fair_adj} shows how structure learning augments a 40-node subgraph from the same dataset. Compared to the original graph (left), the learned graph (right) shows removed or weakened biased connections and strengthened or added unbiased ones. For example, an edge between a male positive node and a male negative node is removed to prevent the male negative node from being biased toward a positive prediction. 
\subsection{GAT and GT Analysis}
More complex architectures, such as GAT and Graph Transformers, are often preferred over simpler models like GCN due to their superior performance. We analyze the advantage of GAT and Graphormer\cite{ying2021transformers}, a GT, over GCN and MLP (see Appendix) using our approach. Table \ref{tab:gat_gt_acc} reports node classification accuracy across multiple datasets, comparing teacher models (GAT and GT) against the GCN student model under different distillation variants.
\begin{table}[!]
\centering
\setlength{\tabcolsep}{3pt} 
\resizebox{\textwidth}{!}{
\begin{tabular}{ll|l||lllll}
\hline
  & & Teacher & GCN (Van.) & GCN (w/o) & GCN (Feat.) & GCN (Adj) & GCN (Both) \\ \hline
\multirow{2}{*}{Cora} & GAT & 80.24 $\pm$ 0.28 & \multirow{2}{*}{79.23 $\pm$ 0.26} & 79.27 $\pm$ 0.31 & 81.51 $\pm$ 0.41 & 79.30 $\pm$ 0.43 & 81.24 $\pm$ 0.57 \\
  & GT & 77.42 $\pm$ 0.12 & & 76.19 $\pm$ 0.14 & 81.13 $\pm$ 0.21 & 78.14 $\pm$ 0.12 & 79.01 $\pm$ 0.12 \\ \hline
\multirow{2}{*}{Citeseer} & GAT & 73.21 $\pm$ 0.21 & \multirow{2}{*}{71.64 $\pm$ 0.16} & 71.62 $\pm$ 0.27 & 72.97 $\pm$ 0.42 & 70.04 $\pm$ 0.11 & 72.11 $\pm$ 0.53 \\
  & GT & 65.28 $\pm$ 0.11 & & 64.53 $\pm$ 0.21 & 68.20 $\pm$ 0.21 & 66.12 $\pm$ 0.19 & 67.12 $\pm$ 0.14 \\ \hline
\multirow{2}{*}{Photo} & GAT & 91.35 $\pm$ 0.32 & \multirow{2}{*}{90.23 $\pm$ 0.23} & 90.01 $\pm$ 0.21 & 90.83 $\pm$ 0.40 & 90.56 $\pm$ 0.22 & 91.17 $\pm$ 0.39 \\
  & GT & 94.20 $\pm$ 0.16 & & 90.08 $\pm$ 0.14 & 92.97 $\pm$ 0.16 & 91.25 $\pm$ 0.14 & 91.98 $\pm$ 0.19 \\ \hline
\multirow{2}{*}{Cornell} & GAT & 53.67 $\pm$ 0.41 & \multirow{2}{*}{53.61 $\pm$ 0.41} & 53.15 $\pm$ 0.19 & 57.14 $\pm$ 0.23 & 54.14 $\pm$ 0.21 & 56.13 $\pm$ 0.15 \\
  & GT & 70.59 $\pm$ 0.12 & & 52.98 $\pm$ 0.21 & 60.21 $\pm$ 0.14 & 55.61 $\pm$ 0.21 & 58.76 $\pm$ 0.14 \\ \hline
\multirow{2}{*}{Texas} & GAT & 64.31 $\pm$ 0.24 & \multirow{2}{*}{62.36 $\pm$ 0.20} & 62.71 $\pm$ 0.13 & 65.59 $\pm$ 0.22 & 63.12 $\pm$ 0.24 & 65.49 $\pm$ 0.25 \\
  & GT & 73.89 $\pm$ 0.14 & & 62.39 $\pm$ 0.31 & 70.56 $\pm$ 0.12 & 65.59 $\pm$ 0.14 & 64.51 $\pm$ 0.12 \\ \hline
\end{tabular}}

\caption{Node classification accuracy comparing teacher models (GAT, GT) against student GCN under different settings. `Van' is the student without any distillation, `w/o' is the standard knowledge distillation (without M2D), and `Feat.', `Adj' and `Both' are M2D applied to features, adjacency, and both, respectively. Results show that M2D, particularly Feat. and Both consistently improve student performance and can match or surpass teacher accuracy across datasets.}
\label{tab:gat_gt_acc}
\end{table}
M2D consistently improves student performance compared to both a vanilla student (without distillation) and standard distillation. Feature-based distillation provides the most reliable gains, while adjacency-only distillation is less effective. Joint distillation (both) typically achieves the second-best results, in some cases matching or even surpassing the teacher models. These findings demonstrate that M2D effectively transfers knowledge from more complex architectures, such as GAT and GT, to simpler models.
\begin{figure}[!]
    \centering
    \includegraphics[width=\textwidth]{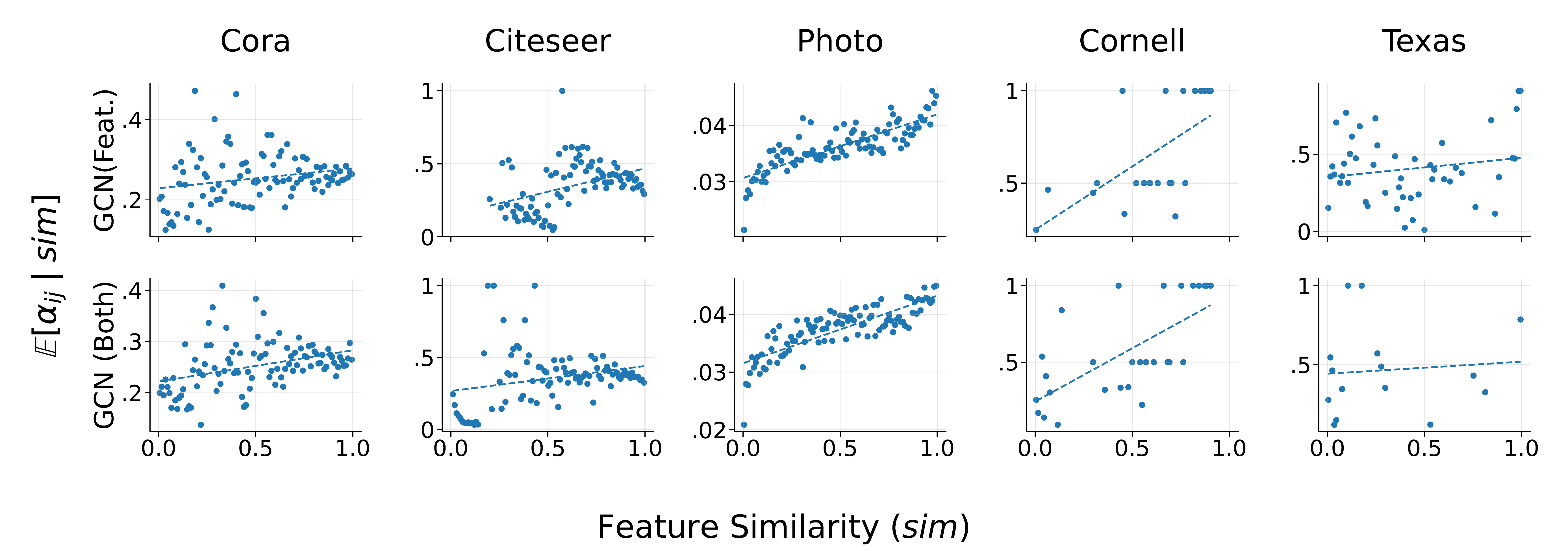}
    \caption{Analysis of learned node features with respect to the attention of the teacher GAT. For each dataset and feature type, we partition the range of feature similarity $\mathrm{sim}(\mathbf{z}_i, \mathbf{z}_j)$ into bins and compute the average attention weight within each bin for existing edges. 
    The increasing trend in homophilic datasets (Cora, Citeseer, Photo) indicates that higher attention weights correspond to higher similarity in the learned features, showing that the features capture the attention mechanism of the GAT teacher (Corollary \ref{cor:atten_sim}). We can also see the trend in heterophilic datasets (Cornell, Texas).}
    \label{fig:gat_feat}
\end{figure}

Figure \ref{fig:gat_feat} analyzes how the learned node features align with the attention mechanism of a teacher GAT model. We first compute pairwise feature similarity using RBF Kernel ($\sigma = 1$), i.e., $\mathrm{sim}(z_i, z_j)$, and group node pairs into bins based on their similarity values. For each bin, we measure the average attention weight of the teacher's existing edges. For homophilic datasets (Cora, Citeseer, Photo), we observe a clear increasing trend: higher similarity corresponds to larger attention weights. In heterophilic datasets (Cornell, Texas), a similar positive relationship is still observed; however, it is weaker and more dispersed, indicating greater variability in how attention aligns with feature similarity. This difference highlights how the alignment between feature similarity and attention depends on the underlying graph structure.
\begin{figure}[!]
    \centering
    \includegraphics[width=0.78\textwidth]{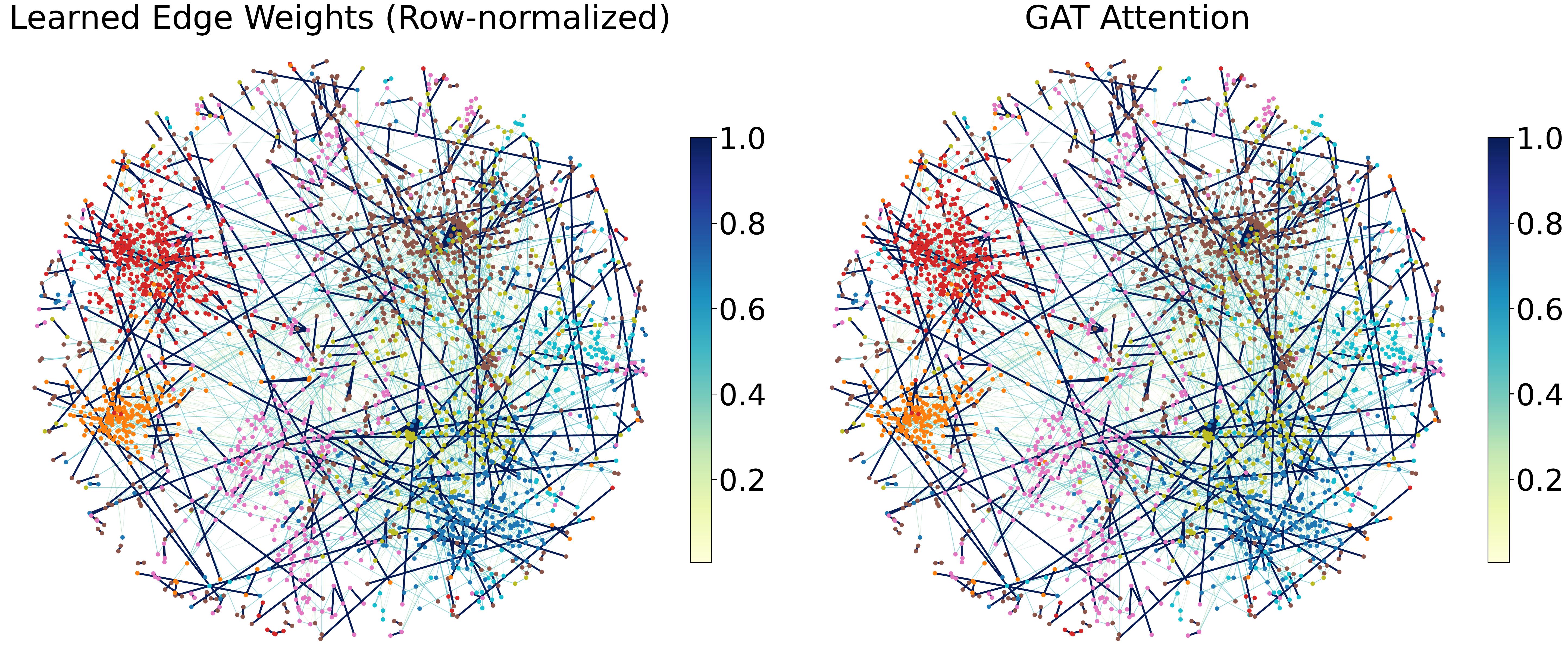}
    \caption{Visualization of edge weights on Cora (homophilic)  using row-normalized M2D learned weights (left) and layer-1 GAT attention coefficients (right). Edge color intensity and line width are proportional to the magnitude of the corresponding weight (darker and thicker edges indicate higher importance). Learned weights are normalized per node, aligning them with the probabilistic interpretation of attention coefficients. Both visualizations use an identical layout for direct comparison. The learned weights approximate the GAT coefficients (Pearson's correlation = $0.84$).}
    \label{fig:gat_adj}
\end{figure}
Figure \ref{fig:gat_adj} visualizes edge importance on Cora (homophilic) using row-normalized M2D edge weights and GAT attention coefficients (Layer 1). Edge color and line width reflect the magnitude of importance (darker and thicker edges indicating higher values). The M2D weights are normalized per node (rows sum to one), enabling a direct probabilistic comparison with attention coefficients. Using an identical layout, both visualizations exhibit highly similar patterns, indicating that the learned edge weights closely match the GAT attention mechanism.

%% file: files/analysis.tex



%% file: files/conclusion.tex
\section{Conclusion and Future Work}

We have introduced Model-to-Data (M2D) distillation, motivated by the need for increasing the transparency of Graph Neural Networks. M2D transfers complexity from a complex model to the data (graph structure and features), enabling a simpler student model to achieve similar performance as the complex teacher. Our results show that M2D helps us to better understand how fair GNNs mitigate biases in the data and why complex GNNs, such as Graph Attention Networks and Graph Transformers, outperform Graph Convolution Networks on some benchmarks.

\textbf{Limitations and future work.} We apply M2D to relatively small datasets, as scaling up our augmentation approach is a challenge, especially for structure learning, which we leave as future work. In practice, one can also apply M2D to a smaller sample from a larger graph. Moreover, we want to extend our approach to consider additional augmentations (e.g., feature deletion, node additions).

%% file: files/appendix.tex
\appendix

\section{Technical appendices and supplementary material}

\section*{Graph Neural Networks}
We adopt a \textbf{Graph Convolutional Network (GCN)} as the student and a \textbf{Graph Attention Network (GAT)} as the teacher in several experiments~\cite{kipf2016semi, velivckovic2018graph}. Both models belong to the class of message-passing neural networks, which learn node representations $\mathbf{h}_v$ used for prediction $y_v = f(\mathbf{h}_v)$. These representations are computed via iterative $\mathsf{AGGREGATE}$ and $\mathsf{COMBINE}$ operations:
\begin{equation*}
\mathbf{h}_v^{t+1} = \mathsf{COMBINE}^t\Big(\mathbf{h}_v^{t}, \; \mathsf{AGGREGATE}^t\big(\{\mathbf{h}_u^{t} : A_{uv} = 1\}\big)\Big),
\end{equation*}
where $\mathbf{h}_v^{t}$ denotes the embedding of node $v$ at layer $t$.

The primary difference between GCN and GAT lies in the aggregation mechanism. GCN employs fixed, normalized adjacency weights to average neighbor features, whereas GAT learns attention coefficients that assign different importance to neighbors, enabling adaptive and data-dependent message passing.

For GCN, the layer-wise update is given by:
\begin{equation*}
\mathbf{h}_v^{t+1} = \sigma\left(
\frac{1}{\sqrt{|N(v)|+1}} \mathbf{W}^t
\left(
\frac{\mathbf{h}_v^{t}}{\sqrt{|N(v)|+1}} +
\sum_{u \in N(v)} \frac{\mathbf{h}_u^{t}}{\sqrt{|N(u)|+1}}
\right)
\right),
\end{equation*}

For GAT, the update rule is:
\begin{equation*}
\mathbf{h}_v^{t+1} = \sigma\left(
\sum_{u \in N(v)\cup\{v\}} \alpha_{vu} \,\mathbf{W} \mathbf{h}_u^{t}
\right),
\end{equation*}
where $\alpha_{vu}$ denotes the attention coefficient on edge $(v,u)$:
\begin{equation*}
\alpha_{vu} =
\frac{
\exp\Big(\mathrm{LeakyReLU}\big(\mathbf{a}^\top
[\mathbf{h}_v^{t}\mathbf{W}, \; \mathbf{h}_u^{t}\mathbf{W}]
\big)\Big)
}{
\sum_{k \in N(v)\cup\{v\}}
\exp\Big(\mathrm{LeakyReLU}\big(\mathbf{a}^\top
[\mathbf{h}_v^{t}\mathbf{W}, \; \mathbf{h}_k^{t}\mathbf{W}]
\big)\Big)
},
\end{equation*}
with learnable parameters $\mathbf{W}$ and $\mathbf{a}$.

We also consider a \textbf{Graph Transformer (GT)} as a teacher model. More specifically, we consider Graphormer~\cite{ying2021transformers} as a representative GT. GTs are built by stacking multiple Transformer layers~\cite{vaswani2017attention}, where each layer comprises a self-attention module followed by a feed-forward network.

Let $\mathbf{H} \in \mathbb{R}^{n \times d}$ denote the input to the self-attention module. This input is linearly projected into query, key, and value representations via learnable matrices $\mathbf{W}_Q$, $\mathbf{W}_K$, and $\mathbf{W}_V$:
\begin{align*}
\mathbf{Q} = \mathbf{H}\mathbf{W}_Q, \quad
\mathbf{K} = \mathbf{H}\mathbf{W}_K, \quad
\mathbf{V} = \mathbf{H}\mathbf{W}_V.
\end{align*}

The self-attention operation is then computed as:
\begin{align*}
\mathrm{Attention}(\mathbf{Q}, \mathbf{K}, \mathbf{V}) =
\mathrm{softmax}\left(
\frac{\mathbf{Q}\mathbf{K}^\top}{\sqrt{d_K}}
\right)\mathbf{V},
\end{align*}
where $d_K$ denotes the dimensionality of the key vectors.

\section*{Evaluation Metrics}

\textbf{Utility:}
We use accuracy to evaluate utility.

\textbf{Fairness:}
We measure group fairness with respect to a sensitive attribute $s$, where $s=1$ denotes the sensitive group and $s=0$ the non-sensitive group. Lower values indicate better fairness.

\textit{Demographic Parity (DP)} measures the difference in positive prediction rates across groups:
\begin{equation}
DP(\hat{y}, s) = \left| p(\hat{y}=1 \mid s=1) - p(\hat{y}=1 \mid s=0) \right|.
\end{equation}

\textit{Equality of Opportunity (EQOP)} measures the difference in true positive rates:
\begin{equation}
EQOP(\hat{y}, s, y) = \left| p(\hat{y}=1 \mid s=1, y=1) - p(\hat{y}=1 \mid s=0, y=1) \right|.
\end{equation}
DP and EQOP both measure bias. We can get corresponding fairness by subtracting them from 1.
\subsection*{Dataset Details}
\subsubsection*{Dataset for Fairness Evaluation:}
\textsc{Simulation}:
We generate a synthetic graph with 1000 nodes using a stochastic block model with two communities of sizes 600 (majority) and 400 (minority), and set intra and inter-block edge probabilities to achieve high assortativity ($\approx 0.77$). The sensitive attribute $s$ is generated conditionally on the label $\mathbf{y}$: $s \sim \text{Bernoulli}(p)$ if $y=1$ and $s \sim \text{Bernoulli}(1-p)$ otherwise, we set $p=0.7$ to induce bias, creating high correlation between $s$ and $\mathbf{y}$. This setup introduces structural bias, as nodes are more likely to connect within the same community, leading to label–group correlations in the graph topology. Each node has 20 features, of which 8 are noisy, sampled from $\mathcal{N}(0,1)$, while the remaining features are generated as $\gamma \cdot y + \mathcal{N}(0,1)$. This introduces feature-level bias, as informative attributes are correlated with the label and, indirectly, with the sensitive attribute, reinforcing disparities present in both structure and features.

\textsc{German} \citep{german}:
Vertices represent 1000 individuals with 20 financial and demographic features. Edges are computed from vertex similarity. The prediction task is to classify credit (good vs. bad credit), with gender treated as the sensitive attribute.

\textsc{NBA} \citep{dai2021say}:
This dataset consists of 403 NBA players connected via their Twitter interactions. Node features capture player statistics, and the task is to predict whether a player’s salary is above or below the median. Nationality is treated as a sensitive attribute.
\subsubsection*{Dataset for Improved Performance Evaluation}
The statistics of the datasets used to analyze the performance of GAT and GT are provided in Table \ref{tab:data_gat}.

\subsection*{Baselines}
Vanilla: The vanilla model is the student model without any distillation. 

FairGNN \cite{dai2021say}: Jointly trains a GNN classifier with a neural network adversary, where the adversary encourages the learned representations to remove sensitive information and thus mitigate bias.

FairVGNN \cite{wang2022improving}: FairVGNN learns fair feature views by identifying and masking features correlated with sensitive attributes, explicitly accounting for shifts in these correlations induced by feature propagation. Conditioned on the resulting views, it adaptively regularizes the encoder parameters to suppress the influence of sensitive-related information.

NIFTY \cite{agarwal2021towards}: NIFTY ensures fairness and stability by combining a joint fairness–stability objective with Lipschitz-based layer-wise weight normalization. It provides theoretical guarantees for counterfactual fairness and stable representations, and achieves strong empirical performance on high-stakes datasets.

GT \cite{ying2021transformers}: Transformer-based model for graph representation learning that achieves strong performance across a wide range of tasks. Its core design centers on explicitly encoding graph structure into the Transformer architecture through centrality encoding, edge encoding, and spatial encoding, enabling effective modeling of node relationships beyond standard message passing.

\begin{table}[]
\centering
\begin{tabular}{l|l|l|l|l|l}
Dataset  & Type         & Nodes & Edges   & Classes & Features \\ \hline
Cora     & Homophilic   & 2708  & 5429    & 7       & 1433     \\
Citeseer & Homophilic   & 3327  & 4732    & 6       & 3703     \\
Photo    & Homophilic   & 7650 & 238163 & 8       & 745      \\
Cornell  & Heterophilic & 183   & 298     & 5       & 1703     \\
Texas    & Heterophilic & 183   & 325     & 5       & 1703 \\ \hline
\end{tabular}
\caption{Dataset statistics \cite{sen2008collective,shchur2018pitfalls,garcia2016using}}
\label{tab:data_gat}
\end{table}
\subsection*{Proof of Theorems}

Our theorems are based on feature augmentations. A similar analysis can be provided for structural augmentations where the learned structure approximates the representations from the teacher model.

\subsection*{Theorem 1}
\begin{proof}
Let us assume a linear teacher and student prediction head. Then,
let the teacher and student logits be as follows:
\begin{equation*}
\hat{\mathbf{y}}_{T} = \mathbf{U}_T \mathbf{H}_{T}, 
\qquad 
\hat{\mathbf{y}}_{s} = \mathbf{U}_s \mathbf{H}_{s}.
\end{equation*}
As $\| \hat{\mathbf{y}}_s - \hat{\mathbf{y}}_T \|^2_2 \approx 0$, $\mathbf{U}_s \mathbf{H}_s = \mathbf{U}_T \mathbf{H}_T$. We obtain $\mathbf{H}_s$ by solving the following least-squares problem:
\[
\min_{\mathbf{H}_s} \|\mathbf{U}_s \mathbf{H}_s - \mathbf{U}_T \mathbf{H}_T\|_F^2.
\]
Gradient descent with zero initialization converges to the minimum-norm solution of the linear least-squares problem \cite{gunasekar2017implicit}. Therefore, we get:
\[
\mathbf{H}_s = \mathbf{U}_s^\dagger \mathbf{U}_T \mathbf{H}_T,
\]
where $\mathbf{U}_s^\dagger$ denotes the Moore–Penrose pseudo-inverse. 

Hence, $\mathbf{H}_s$ is a linear (and therefore measurable) function of $\mathbf{H}_T$. Since $\mathbf{H}_T \perp \mathbf{s}$ and measurable functions preserve independence, it follows that $\mathbf{H}_s \perp \mathbf{s}$. Finally, let $\mathbf{Z} = f_\phi(\mathbf{H}_s)$, where $f_\phi$ is Borel measurable. Then $\mathbf{Z}$ is a measurable function of $\mathbf{H}_s$, and thus $\mathbf{Z} \perp \mathbf{s}$.
\end{proof}

The theorem shows that M2D can always distill a fair model via feature augmentation alone based only on the teacher model's logits. However, we have shown empirically that this can be achieved based on a small number of features using the diversity regularization loss described in Section \ref{sec::training}, which is a stronger result.   

\subsubsection*{Theorem 2}
\begin{proof}
Let us consider a single-layer GAT, and the teacher embedding for node $i$ be as follows:
\begin{equation}
\mathbf{h}_{T,i} = \sigma\left(
\sum_{j \in \mathcal{N}(i)} \alpha_{ij} \mathbf{W} \mathbf{x}_j
\right),
\end{equation}
where $\sigma(\cdot)$ is a pointwise nonlinearity (e.g., ReLU or LeakyReLU).

Using a first-order Taylor expansion of $\sigma(\cdot)$, 
there exists a linear map $\tilde{\mathbf{W}}$ and a bias term $\mathbf{b}_i$ such that
\begin{equation*}
\mathbf{h}_{T,i} =
\sum_{j \in \mathcal{N}(i)} \alpha_{ij} \tilde{\mathbf{W}} \mathbf{x}_j + \mathbf{b}_i.
\end{equation*}

Let the teacher and student logits be:
\begin{equation*}
\hat{\mathbf{y}}_{T,i} = \mathbf{U}_T \mathbf{h}_{T,i}, 
\qquad 
\hat{\mathbf{y}}_{s,i} = \mathbf{U}_s \mathbf{h}_{s,i}.
\end{equation*}

Under the assumption of small distillation error, we have

\begin{equation*}
\| \mathbf{U}_s \mathbf{h}_{s,i} - \mathbf{U}_T \mathbf{h}_{T,i} \|_2^2 = \|\boldsymbol{\epsilon}_i\|_2^2 \leq \epsilon.
\end{equation*}
where , $\epsilon_i = \mathbf{U}_s \mathbf{h}_{s,i} - \mathbf{U}_T \mathbf{h}_{T,i}$
Therefore:
\begin{equation*}
\mathbf{U}_s \mathbf{h}_{s,i} = \mathbf{U}_T \mathbf{h}_{T,i} + \boldsymbol{\epsilon}_i.
\end{equation*}

Substituting $\mathbf{h}_{T,i}$:
\begin{align*}
\mathbf{U}_s \mathbf{h}_{s,i}
&=
\mathbf{U}_T \left(
\sum_{j} \alpha_{ij} \tilde{\mathbf{W}} \mathbf{x}_j + \mathbf{b}_i
\right)
+ \boldsymbol{\epsilon}_i \\
&=
\sum_{j} \alpha_{ij} \mathbf{U}_T \tilde{\mathbf{W}} \mathbf{x}_j
+ \mathbf{U}_T \mathbf{b}_i
+ \boldsymbol{\epsilon}_i.
\end{align*}

This is a linear system in $\mathbf{h}_{s,i}$. It follows that:
\begin{equation}
\mathbf{h}_{s,i}
=
\mathbf{U}_s^\dagger \left(
\sum_{j} \alpha_{ij} \mathbf{U}_T \tilde{\mathbf{W}} \mathbf{x}_j
+ \mathbf{U}_T \mathbf{b}_i
+ \boldsymbol{\epsilon}_i
\right)
+ \mathbf{n}_i,
\label{eq:h_s}
\end{equation}
where $\mathbf{U}_s^\dagger$ is the Moore--Penrose pseudoinverse and 
$\mathbf{n}_i \in \mathrm{Null}(\mathbf{U}_s)$.

Rearranging the terms:
\begin{align*}
\mathbf{h}_{s,i}
&=
\sum_{j} \alpha_{ij} 
\mathbf{U}_s^\dagger \mathbf{U}_T \tilde{\mathbf{W}} \mathbf{x}_j
+
\mathbf{U}_s^\dagger (\mathbf{U}_T \mathbf{b}_i + \boldsymbol{\epsilon}_i)
+ \mathbf{n}_i.
\end{align*}

Let's define the following:
\[
\mathbf{M} = \mathbf{U}_s^\dagger \mathbf{U}_T \tilde{\mathbf{W}}, 
\qquad
\mathbf{r}_i = \mathbf{U}_s^\dagger (\mathbf{U}_T \mathbf{b}_i + \boldsymbol{\epsilon}_i).
\]

Then, we get that:
\[
\mathbf{h}_{s,i}
=
\sum_{j \in \mathcal{N}(i)} \alpha_{ij} \mathbf{M} \mathbf{x}_j
+ \mathbf{r}_i + \mathbf{n}_i,
\]
where $\mathbf{n}_i \in \mathrm{Null}(\mathbf{U}_s)$ and $r_i$ captures the residual term arising from the linearization bias of GAT and distillation error.

Applying the feature learner $f_\phi$, we get:
\[
\mathbf{z}_i = f_\phi\left(\mathbf{h}_{s,i}\right)
=
f_\phi(\sum_{j \in \mathcal{N}(i)} \alpha_{ij} \mathbf{M} \mathbf{x}_j
+ \mathbf{r}_i + \mathbf{n}_i),
\]
\end{proof}

Similar to Theorem \ref{the:fair}, the theorem shows that M2D can always distill GAT representations via feature augmentation alone based only on the teacher model's logits. However, we have shown empirically that this can be achieved based on a small number of features using the diversity regularization loss described in Section \ref{sec::training}. A similar proof can be derived for GT.
\subsubsection*{Corollary 1}
\begin{proof}
The attention coefficients are given by a softmax over scores $e_{ij}$, hence:
\begin{equation*}
\alpha_{ij} > \alpha_{ik} \;\Longrightarrow\; e_{ij} > e_{ik}.
\end{equation*}
By the homophily condition, $\psi$ is monotone, then:
\begin{equation*}
\mathrm{sim}(\mathbf{h}_{T,i},\mathbf{h}_{T,j}) > \mathrm{sim}(\mathbf{h}_{T,i},\mathbf{h}_{T,k}).
\end{equation*}
Applying $\mathbf{r}_i \approx 0$, let us define the attention-weighted aggregated representation as:
\begin{equation*}
\mathbf{u}_i = \sum_{j \in \mathcal{N}(i)} \alpha_{ij} \mathbf{M}\mathbf{x}_j + \mathbf{n}_i,
\end{equation*}
If we use gradient descent with zero initialization to optimize the distillation, then $\mathbf{n}_i = \mathbf{0}$ as $\mathbf{h}_{s,i}$ in Equation \ref{eq:h_s} converges to the minimum-norm solution of the linear least square problem \cite{gunasekar2017implicit}. Therefore $\mathbf{u}_i = \sum_{j \in \mathcal{N}(i)} \alpha_{ij} \mathbf{M}\mathbf{x}_j$, and the corresponding learned feature is $\mathbf{z}_i = f_\phi(\mathbf{u}_i)$.

Using the linear form of the teacher representation
$\mathbf{h}_{T,i}=\sum_{j\in\mathcal N(i)}\alpha_{ij}\mathbf{W}\mathbf{x}_j$ and $\mathbf{M} = \mathbf{U}_s^\dagger \mathbf{U}_T \mathbf{W}$, we can write:
\begin{equation*}
\mathbf{u}_i
=
\sum_{j\in\mathcal N(i)}\alpha_{ij}\mathbf{M}\mathbf{x}_j
=
\mathbf{U}_s^\dagger \mathbf{U}_T \mathbf{h}_{T,i},
\end{equation*}

Thus, $\mathbf{u}_i$ is a linear transformation of the teacher representation $\mathbf{h}_{T,i}$. Assuming that the similarity measure is order-preserving under this linear transformation, we obtain:
\begin{equation*}
\mathrm{sim}(\mathbf{h}_{T,i},\mathbf{h}_{T,j}) > \mathrm{sim}(\mathbf{h}_{T,i},\mathbf{h}_{T,k})
\;\Longrightarrow\;
\mathrm{sim}(\mathbf{u}_i,\mathbf{u}_j) > \mathrm{sim}(\mathbf{u}_i,\mathbf{u}_k).
\end{equation*}
Finally, since $\mathbf{z}_i = f_\phi(\mathbf{u}_i)$ and $f_\phi$ is order-preserving with respect to $\mathrm{sim}(\cdot,\cdot)$, it follows that
\begin{equation*}
\mathrm{sim}(\mathbf{z}_i,\mathbf{z}_j) > \mathrm{sim}(\mathbf{z}_i,\mathbf{z}_k).
\end{equation*}
\end{proof}

\subsection*{Training Algorithm}
Algorithm \ref{alg:m2d} shows the training of M2D. M2D performs iterative distillation by jointly refining the student model and the underlying graph structure. At each iteration, the current student representation is used to generate an augmented graph. The student is then updated using this augmented input to match the teacher outputs. Subsequently, the graph and feature learners are updated based on the refined student representation. This alternating process enables mutual improvement between representation learning and structure adaptation. The final augmented graph and features capture the teacher’s behavior more effectively.
\begin{algorithm}[t]
\caption{M2D Distillation}
\begin{algorithmic}[1]
\REQUIRE Graph $\mathbf{A}$, features $\mathbf{X}$, labels $\mathbf{y}$, teacher outputs $\hat{\mathbf{y}}_T$
\STATE Initialize $\Theta_s^{0}$, $\Theta_g^{0}$ and $\mathbf{H}_s^{0} \leftarrow f_s(\mathbf{X}, \mathbf{A})$

\FOR{$t = 1$ to $t_{\max}$}

\STATE \textbf{(1) Construct augmented graph and features}
\STATE $\mathbf{Z}^{t} \leftarrow f_\phi(\mathbf{H}_s^{t-1})$, \quad $\tilde{\mathbf{X}}^{t} \leftarrow [\mathbf{X} \,\Vert\, \mathbf{Z}^{t}]$
\STATE $\mathbf{W}^{t} \leftarrow f_a(\mathbf{H}_s^{t-1})$
\STATE $\tilde{\mathbf{A}}^{t} \leftarrow (1-\alpha)\mathbf{A} + \alpha\big((1-\beta)\mathbf{W}^{t} + \beta\, s(\mathbf{X})\big)$

\vspace{2pt}
\STATE \textbf{(2) Update student (inner step)}
\STATE $\mathbf{H}_s^{t} \leftarrow f_s(\tilde{\mathbf{X}}^{t}, \tilde{\mathbf{A}}^{t})$
\STATE $\hat{\mathbf{y}}_s^{t} \leftarrow f_c(\mathbf{H}_s^{t})$
\STATE Update $\Theta_s^{t}$ using $\mathcal{L}_{cls} + \mathcal{L}_{dis}$

\vspace{2pt}
\STATE \textbf{(3) Update graph and feature learner (outer step)}
\STATE $\mathbf{Z}^{t} \leftarrow f_\phi(\mathbf{H}_s^{t})$, \quad $\mathbf{W}^{t} \leftarrow f_a(\mathbf{H}_s^{t})$
\STATE Update $\Theta_g^{t}$ using 
$\mathcal{L}_{cls} + \mathcal{L}_{dis} + \mathcal{L}_{div} + \mathcal{L}_{graph}$

\ENDFOR

\RETURN $\tilde{\mathbf{A}}^{t}$, $\tilde{\mathbf{X}}^{t}$
\end{algorithmic}
\label{alg:m2d}
\end{algorithm}
\section*{Software and Hardware}
\begin{itemize}
    \item Operating System: Linux (Red Hat Enterprise Linux 8.9 (Ootpa))
    \item GPU: NVIDIA A40
    \item Software: Python 3.8.10, torch 2.2.1
\end{itemize}

\begin{table}[]
\begin{tabular}{l|l|llll}
                            & Teacher  & GCN(Feat.) & GCN(Adj.)  & GCN(Both)  & GCN(w/0)   \\ \cline{1-6} 
\multirow{3}{*}{Simulation} & FairGNN  & 94.12 $\pm$ 0.13 & 91.19 $\pm$ 0.21 & 93.25 $\pm$ 0.19 & 89.14 $\pm$ 0.11 \\
                            & FairVGNN & 96.03 $\pm$ 0.12 & 93.12 $\pm$ 0.26 & 96.91 $\pm$ 0.13 & 92.13 $\pm$ 0.30 \\
                            & NIFTY    & 94.31 $\pm$ 0.12 & 95.12 $\pm$ 0.30 & 95.59 $\pm$ 0.24 & 92.15 $\pm$ 0.21 \\ \hline
\multirow{3}{*}{NBA}        & FairGNN  & 95.13 $\pm$ 0.11 & 92.31 $\pm$ 0.25 & 93.91 $\pm$ 0.24 & 93.63 $\pm$ 0.27 \\
                            & FairVGNN & 94.21 $\pm$ 0.15 & 90.63 $\pm$ 0.34 & 93.01 $\pm$ 0.13 & 92.21 $\pm$ 0.19 \\
                            & NIFTY    & 96.13 $\pm$ 0.12 & 91.79 $\pm$ 0.15 & 94.59 $\pm$ 0.28 & 94.14 $\pm$ 0.25 \\ \hline
\multirow{3}{*}{German}     & FairGNN  & 96.29 $\pm$ 0.21 & 95.97 $\pm$ 0.14 & 96.25 $\pm$ 0.17 & 92.12 $\pm$ 0.25 \\
                            & FairVGNN & 94.98 $\pm$ 0.12 & 95.23 $\pm$ 0.34 & 95.72 $\pm$ 0.31 & 91.79 $\pm$ 0.31 \\
                            & NIFTY    & 95.31 $\pm$ 0.27 & 95.27 $\pm$ 0.19 & 95.81 $\pm$ 0.22 & 91.92 $\pm$ 0.24 \\ \hline
\end{tabular}

\caption{Distillation accuracy of different M2D variants for fair teacher}
\label{tab:fair_dist_acc}
\end{table}

\begin{table}[]
\resizebox{\textwidth}{!}{
\begin{tabular}{l|l|llllll}
                          & Teacher & GCN(Feat.) & GCN(Adj.)  & GCN(Both)  & GCN(w/o)   & MLP(Feat.) & MLP(w/o)   \\ \hline
Cora                      & GAT     & 95.94 $\pm$ 0.12 & 91.05 $\pm$ 0.09 & 96.12 $\pm$ 0.14 & 90.56 $\pm$ 0.10 & 68.12 $\pm$ 0.13 & 63.58 $\pm$ 0.22 \\
                          & GT      & 93.08 $\pm$ 0.14 & 91.15 $\pm$ 0.12 & 93.14 $\pm$ 0.12 & 89.15 $\pm$ 0.12 & 79.12 $\pm$ 0.08 & 75.24 $\pm$ 0.13 \\ \hline
\multirow{2}{*}{Citeseer} & GAT     & 94.14 $\pm$ 0.15 & 95.12 $\pm$ 0.11 & 97.56 $\pm$ 0.12 & 92.41 $\pm$ 0.13 & 80.16 $\pm$ 0.15 & 78.23 $\pm$ 0.14 \\
                          & GT      & 94.12 $\pm$ 0.13 & 95.31 $\pm$ 0.12 & 93.12 $\pm$ 0.15 & 91.57 $\pm$ 0.07 & 83.12 $\pm$ 0.14 & 82.85 $\pm$ 0.18 \\ \hline
\multirow{2}{*}{Photo}    & GAT     & 96.13 $\pm$ 0.05 & 94.14 $\pm$ 0.18 & 93.89 $\pm$ 0.11 & 90.91 $\pm$ 0.21 & 76.12 $\pm$ 0.20 & 73.15 $\pm$ 0.31 \\
                          & GT      & 95.18 $\pm$ 0.14 & 95.54 $\pm$ 0.15 & 93.12 $\pm$ 0.20 & 89.51 $\pm$ 0.11 & 84.12 $\pm$ 0.13 & 80.12 $\pm$ 0.23 \\ \hline
\multirow{2}{*}{Cornell}  & GAT     & 95.12 $\pm$ 0.13 & 95.04 $\pm$ 0.12 & 92.56 $\pm$ 0.12 & 90.80 $\pm$ 0.13 & 74.13 $\pm$ 0.21 & 71.02 $\pm$ 0.28 \\
                          & GT      & 80.41 $\pm$ 0.11 & 75.13 $\pm$ 0.15 & 73.14 $\pm$ 0.17 & 73.01 $\pm$ 0.12 & 86.14 $\pm$ 0.12 & 84.12 $\pm$ 0.21 \\ \hline
\multirow{2}{*}{Texas}    & GAT     & 97.12 $\pm$ 0.16 & 95.35 $\pm$ 0.13 & 96.10 $\pm$ 0.14 & 94.30 $\pm$ 0.19 & 79.12 $\pm$ 0.13 & 77.21 $\pm$ 0.25 \\
                          & GT      & 86.13 $\pm$ 0.14 & 79.12 $\pm$ 0.14 & 74.21 $\pm$ 0.13 & 69.13 $\pm$ 0.14 & 90.63 $\pm$ 0.21 & 87.12 $\pm$ 0.19 \\ \hline
\end{tabular}}
\caption{Distillation accuracy of different M2D variants for the GAT and GT teacher}
\label{tab:dist_acc}
\end{table}

\subsection*{Experimental Setup}
All experiments use two-dimensional learned features for all datasets, except \textsc{NBA}, where a single feature is learned. All GNN models consist of two layers. We use a linear prediction head for both student and teacher in all experiments. For Graph Transformer experiments, we adopt Graphormer \cite{ying2021transformers}. All reported results are averaged over $5$ independent runs with different random seeds. Hyperparameters are selected via grid search based on validation performance. To mitigate overfitting, we employ early stopping after $150$ iterations. Each dataset is divided into 30\% for training, 20\% for validation, and 50\% for testing. We use the Adam optimizer to train all model parameters. 
\subsection*{Additional Results and Visualization}

Table \ref{tab:fair_dist_acc} reports the distillation accuracy of different M2D variants for fair GNN teachers. Overall, M2D consistently improves the student’s ability to approximate the teacher compared to the baseline GCN without augmentation. In most settings, distillation with both structure and feature achieves the best or near-best performance, indicating that jointly adapting node features and graph structure improves distillation. Feature based augmentation also performs competitively, often outperforming adjacency-only augmentation. These results show that M2D effectively transfers the behavior of fairness aware teacher models to simpler student architectures.

Table \ref{tab:dist_acc} reports the distillation accuracy across different M2D variants and teachers. Overall, GCN-based students achieve consistently high accuracy when distilling from both GAT and GT teachers on homophilic datasets (Cora, Citeseer, Photo), indicating low distillation error and effective transfer of teacher behavior. In contrast, MLP-based students exhibit significantly lower accuracy on these datasets, suggesting that the absence of structural information limits their ability to match the teacher. On heterophilic datasets (Cornell, Texas), the trend differs: MLP students achieve higher accuracy, particularly when distilling from GT, while GCN performance degrades in certain settings. These results highlight that distillation accuracy and, consequently, the ability to capture teacher behavior depend strongly on the compatibility between the student architecture and the underlying graph structure.

Table \ref{tab:gat_gt_acc_mlp} compares node classification accuracy of teacher models (GAT, GT) and student MLP under different distillation settings. On homophilic datasets such as \textsc{Cora}, \textsc{Citeseer}, and \textsc{Photo}, M2D consistently improves student performance over both vanilla training and standard distillation, demonstrating effective knowledge transfer from the teacher. However, on heterophilic datasets such as \textsc{Cornell} and \textsc{Texas}, the vanilla MLP already outperforms the teacher models. 
In these cases, M2D leads to performance degradation, as the student is encouraged to mimic suboptimal teacher representations. 
This highlights that the effectiveness of distillation depends on the quality of the teacher and that blindly transferring knowledge can be detrimental when the teacher is weaker than the student. 

Figure \ref{fig:featuresGAT} visualizes the two dimensions of the learned features across five benchmark datasets. 
Each point corresponds to a node and is colored by its class label. Across homophilic datasets such as \textsc{Cora}, \textsc{Citeseer}, and \textsc{Photo}, the learned features exhibit clear class-wise clustering, indicating that the student effectively captures discriminative structure from the teacher.  In contrast, the MLP produces less structured embeddings, with weaker separation between classes, highlighting the importance of graph-based inductive bias. On heterophilic datasets such as \textsc{Cornell} and \textsc{Texas}, the clustering structure is less pronounced, reflecting the inherent difficulty of these datasets and the weaker supervision signal from the teacher. Overall, these visualizations qualitatively support our quantitative results, showing that M2D leads to more structured and class-discriminative representations when the teacher provides meaningful knowledge. We observe similar trends for Graph Transformer (GT) teachers (Figure \ref{fig:featuresGT}). 
The learned representations again exhibit clear class-wise clustering on homophilic datasets. 
The MLP baseline remains less structured, and clustering degrades on heterophilic datasets such as \textsc{Cornell} and \textsc{Texas}.

\begin{table}[!]
\centering
\setlength{\tabcolsep}{3pt} 

\begin{tabular}{ll|l||lll}
\hline
  & & Teacher & MLP (Van.) & MLP (w/o) & MLP (Feat.) \\ \hline
\multirow{2}{*}{Cora} & GAT & 80.24 $\pm$ 0.28 & \multirow{2}{*}{63.76 $\pm$ 0.11} & 63.85 $\pm$ 0.21 & 64.58 $\pm$ 0.13 \\
  & GT & 77.42 $\pm$ 0.12 & & 63.49 $\pm$ 0.18 & 62.79 $\pm$ 0.24 \\ \hline
\multirow{2}{*}{Citeseer} & GAT & 73.21 $\pm$ 0.21 & \multirow{2}{*}{68.71 $\pm$ 0.14} & 67.98 $\pm$ 0.13 & 69.48 $\pm$ 0.21 \\
  & GT & 65.28 $\pm$ 0.11 & & 66.96 $\pm$ 0.22 & 66.11 $\pm$ 0.14 \\ \hline
\multirow{2}{*}{Photo} & GAT & 91.35 $\pm$ 0.32 & \multirow{2}{*}{85.03 $\pm$ 0.09} & 84.18 $\pm$ 0.17 & 86.94 $\pm$ 0.14 \\
  & GT & 94.20 $\pm$ 0.16 & & 84.95 $\pm$ 0.16 & 88.47 $\pm$ 0.26 \\ \hline
\multirow{2}{*}{Cornell} & GAT & 53.67 $\pm$ 0.41 & \multirow{2}{*}{66.34 $\pm$ 0.18} & 59.31 $\pm$ 0.22 & 62.98 $\pm$ 0.15 \\
  & GT & 70.59 $\pm$ 0.12 & & 64.19 $\pm$ 0.16 & 68.80 $\pm$ 0.22 \\ \hline
\multirow{2}{*}{Texas} & GAT & 64.31 $\pm$ 0.24 & \multirow{2}{*}{75.27 $\pm$ 0.20} & 70.20 $\pm$ 0.23 & 71.11 $\pm$ 0.14 \\
  & GT & 73.89 $\pm$ 0.14 & & 72.15 $\pm$ 0.11 & 74.05 $\pm$ 0.19 \\ \hline
\end{tabular}

\caption{Node classification accuracy comparing teacher models (GAT, GT) and student MLP under different settings. 
`Van.' denotes the student trained without distillation, `w/o' standard distillation, and `Feat.' M2D feature distillation. 
M2D improves student performance on \textsc{Cora}, \textsc{Citeseer}, and \textsc{Photo}. 
On \textsc{Cornell} and \textsc{Texas}, where the teacher underperforms the vanilla MLP, M2D degrades performance by transferring suboptimal knowledge.}
\label{tab:gat_gt_acc_mlp}
\end{table}

\begin{figure}[!]
    \centering
    \includegraphics[width=\textwidth]{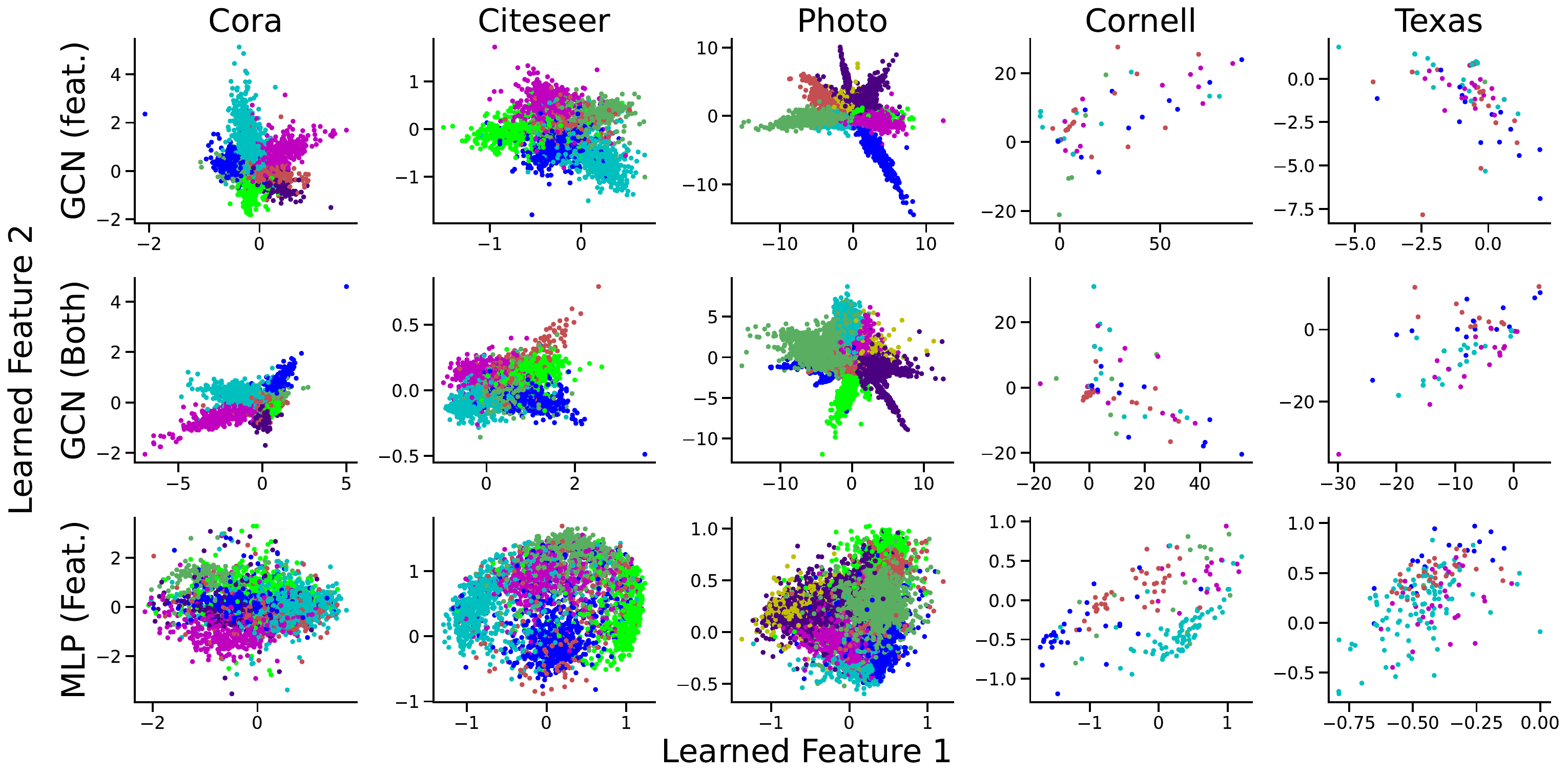}
    \caption{Learned features visualization across five benchmark datasets (Cora, Citeseer, Photo, Cornell, and Texas) obtained from the GAT teacher. The first and second rows present the features learned by the student GCN using only $f_{\phi}$ and using both $f_{\phi}$ and $f_{a}$, respectively, while the third row shows the features produced by the student MLP. Each point corresponds to a node, colored by its class label, illustrating the separability and clustering behavior of the learned features.}
    \label{fig:featuresGAT}
\end{figure}
\begin{figure}[!]
    \centering
    \includegraphics[width=\textwidth]{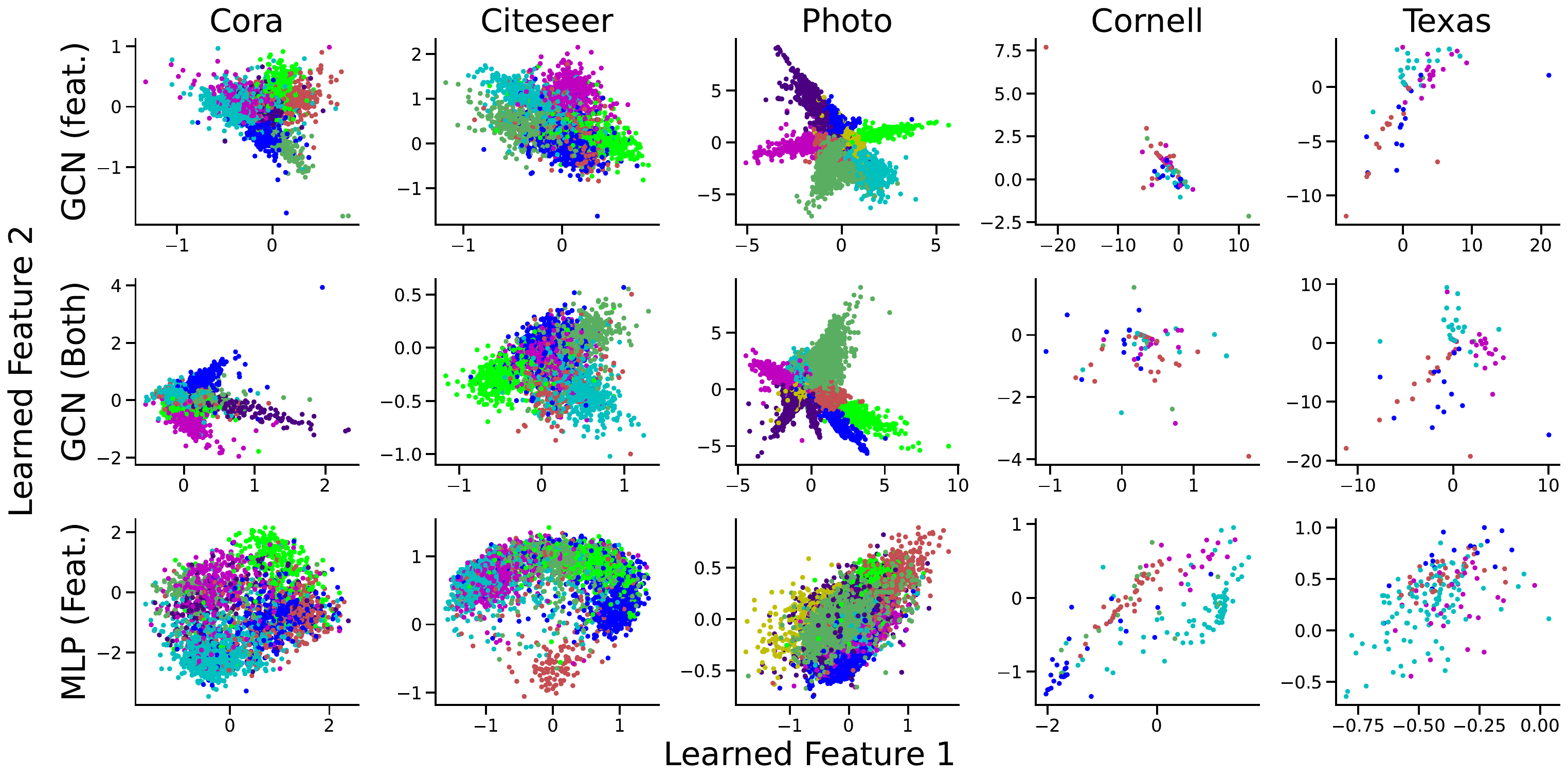}
    \caption{Learned features visualization across five benchmark datasets (Cora, Citeseer, Photo, Cornell, and Texas) obtained from the GT teacher. The first and second rows present the features learned by the student GCN using only $f_{\phi}$ and using both $f_{\phi}$ and $f_{a}$, respectively, while the third row shows the features produced by the student MLP. Each point corresponds to a node, colored by its class label, illustrating the separability and clustering behavior of the learned features.}
    \label{fig:featuresGT}
\end{figure}
Figure~\ref{fig:feat_cora} illustrates the effect of feature-based distillation on \textsc{Cora}. The learned features form well-defined clusters corresponding to class labels. We highlight nodes that are misclassified by standard GCN distillation but correctly classified by GCN(Feat.). These nodes are primarily located near the boundaries between clusters.
While standard GCN fails to correctly classify these boundary nodes, GCN(Feat.) produces representations that group nodes according to their class labels, enabling predictions that are consistent with the teacher. This shows that feature-based distillation helps recover the correct class structure, particularly for nodes near class boundaries.

\begin{figure}[!]
    \centering
    \includegraphics[width=0.8\textwidth]{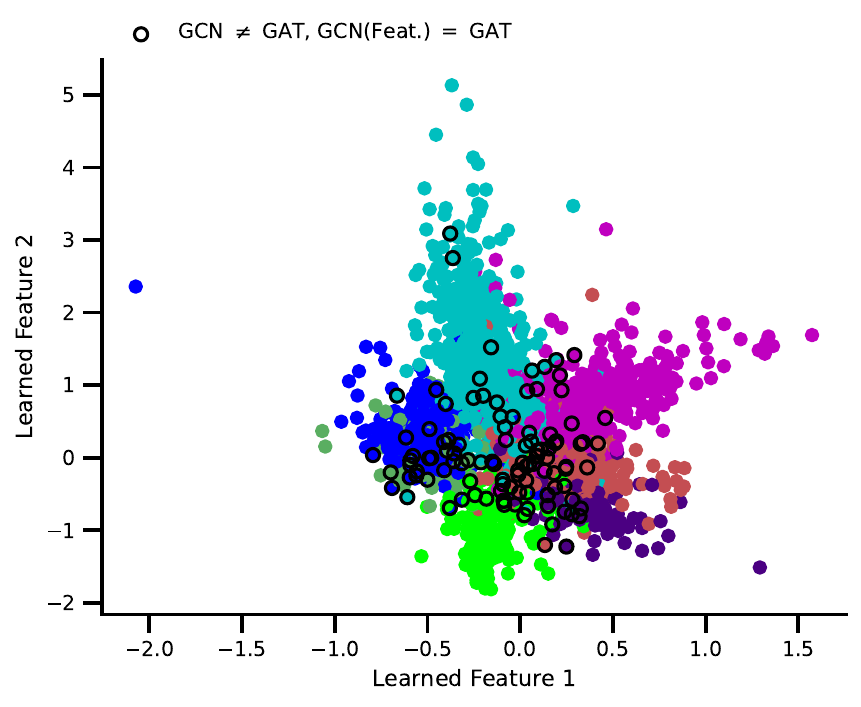}
    \caption{Learned feature visualization on \textsc{Cora} with GAT as teacher and GCN as student. 
Black circles denote nodes misclassified by standard GCN distillation but correctly classified by GCN(Feat.). 
The learned features form clear class-wise clusters, and these nodes lie near cluster boundaries, where GCN(Feat.) successfully matches the teacher while standard GCN fails.}
    \label{fig:feat_cora}
\end{figure}

\begin{figure}[!]
    \centering
    \includegraphics[width=\textwidth]{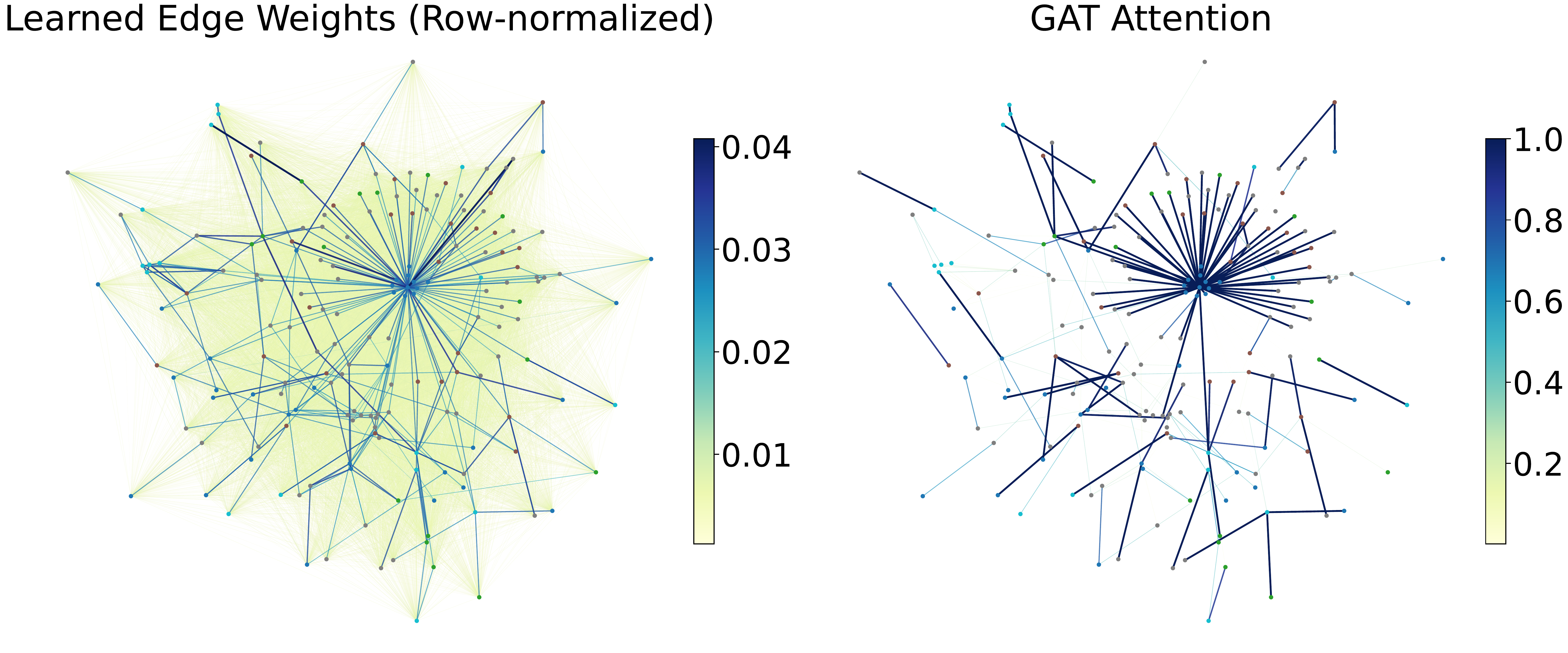}
    \caption{Visualization of edge importance on Cornell (heterophilic) using row-normalized M2D edge weights (left) and layer-1 GAT attention coefficients (right). Edge color and width indicate magnitude, with darker and thicker edges denoting higher importance. Learned weights are row-normalized (sum to one), enabling direct probabilistic comparison with attention. Both panels share an identical layout. The left panel highlights how M2D reweights edges to approximate GAT behavior. However, the Pearson correlation coefficient between the edge weights is $0.11$ (low).}
    \label{fig:adj_cornell}
\end{figure}

Figure \ref{fig:adj_cornell} compares edge importance from M2D and GAT on the heterophilic \textsc{Cornell} graph using a shared layout. While GAT directly learns attention coefficients, M2D learns edge weights that we row-normalize for comparison, enabling a probabilistic interpretation. The visualization reveals clear differences in how edges are weighted across the two methods. Due to heterophily, the normalized M2D weights do not match GAT attention exactly, but instead reflect the adjustments required for the student to approximate the teacher’s behavior. This highlights the role of edge reweighting in aligning the student with attention-based representations. The difference in scale between the two plots is due to the hyperparameter $\gamma$. For the \textsc{Cornell} dataset, a higher value of the hyperparameter (0.45) generates the best result.

\begin{figure}[!]
    \centering
    \includegraphics[width=\textwidth]{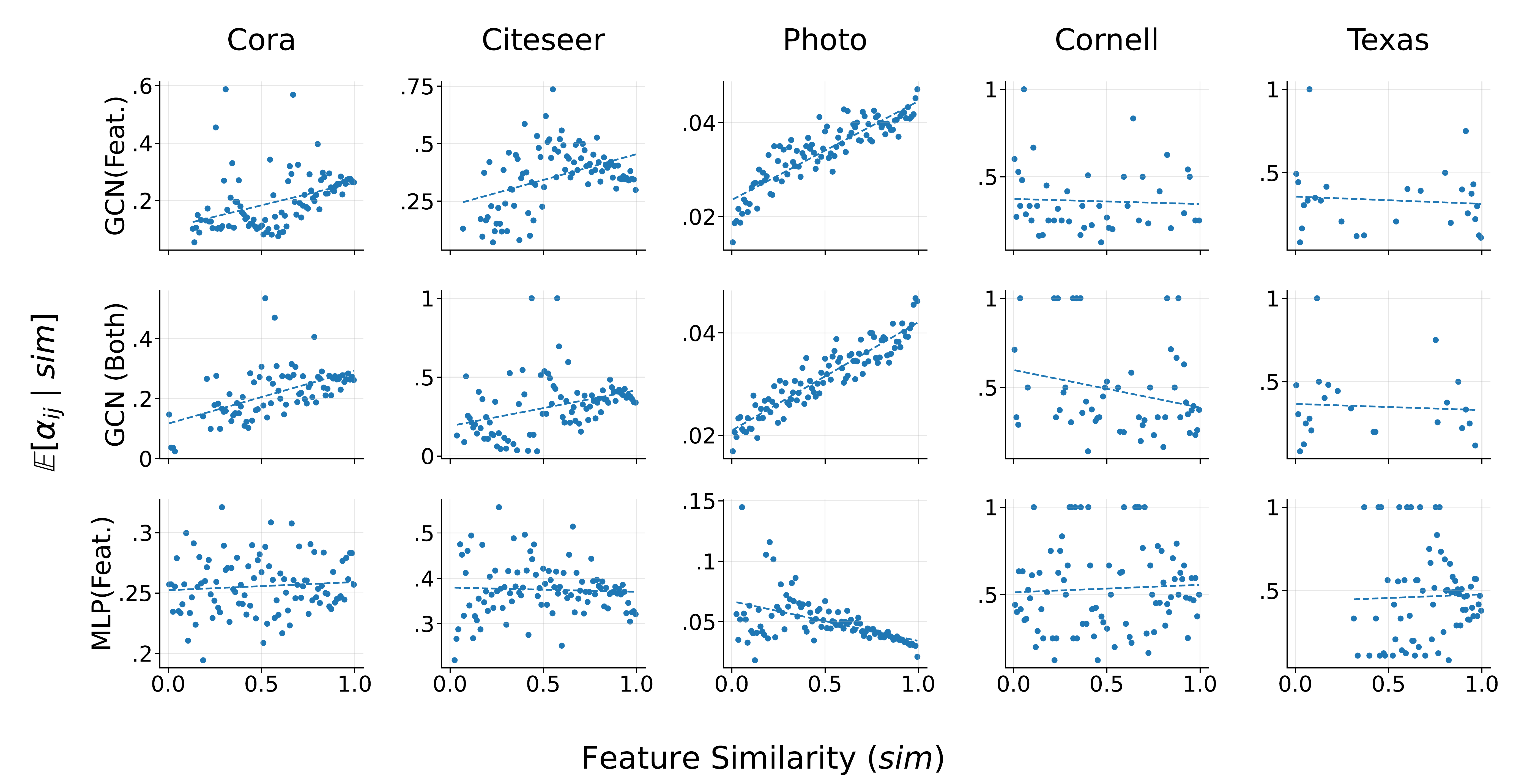}
    \caption{Analysis of learned features with respect to GT attention. We bin feature similarity $\mathrm{sim}(\mathbf{z}_i, \mathbf{z}_j)$ and compute average attention per bin. GCN students exhibit a clear increasing relationship in homophilic datasets (Cora, Citeseer, Photo), indicating alignment with teacher attention. This trend weakens in heterophilic datasets due to lower distillation accuracy. For MLP students, the trend appears only when distillation accuracy is high.}
    \label{fig:feat_attn_gt}
\end{figure}

In Figure \ref{fig:feat_attn_gt}, we analyze how learned features align with the attention mechanism of the GT teacher by examining the relationship between feature similarity and attention weights (Corollary \ref{cor:atten_sim}). Specifically, we partition node pairs into bins based on similarity $\mathrm{sim}(\mathbf{z}_i, \mathbf{z}_j) $ and compute the average attention within each bin (Similar to Figure \ref{fig:gat_feat}). In homophilic datasets (Cora, Citeseer, Photo), GCN students exhibit a clear positive relationship, indicating that higher attention weights correspond to more similar learned features, consistent with Corollary~\ref{cor:atten_sim}. In contrast, this relationship weakens in heterophilic datasets (Cornell, Texas), due to lower distillation accuracy (See Table \ref{tab:dist_acc}). For MLP students, the trend is absent in homophilic datasets due to poor distillation performance but emerges in heterophilic settings where the student better matches the teacher. These results indicate that higher feature similarity is associated with higher attention weights when distillation accuracy is sufficiently high. Though in some cases, the feature similarity is not consistent with attention, performance still improves because the learned features are informative and cluster the classes based on labels (see Figure \ref{fig:featuresGT}).

\subsection*{Broader Impacts}
This work improves the transparency of complex graph neural networks by using M2D. The case study on fair GNNs is particularly relevant, as it demonstrates how M2D can make these models more transparent to humans, increasing their applicability to high-stakes scenarios. This can aid understanding and analysis of how complex models make decisions. However, the method does not guarantee fairness and may not address biases in the data or teacher models. Similar to model distillation approaches, M2D can be used to disclose information about the teacher model. More specifically, one can use M2D to reverse-engineer the teacher model from its logits.

\subsection*{Reproducibility}
Our code is available in the anonymous GitHub repository https://anonymous.4open.science/r/m2d-1F9A